\documentclass[11pt,a4wide]{article}

\usepackage{arxiv}

\usepackage[T1]{fontenc}
\usepackage[utf8]{inputenc}
\usepackage{lmodern}
\usepackage{microtype}
\usepackage{amsmath,amssymb,amsfonts,mathtools}
\usepackage{bm}
\usepackage{graphicx}
\usepackage{subcaption}
\usepackage{booktabs}
\usepackage{siunitx}
\usepackage{xcolor}
\usepackage{listings}
\usepackage{enumitem}
\usepackage{listings}
\usepackage{xcolor}
\usepackage{tikz}
\usetikzlibrary{arrows.meta,decorations.markings}
\usetikzlibrary{decorations.pathmorphing,decorations.pathreplacing,decorations.shapes}
\usetikzlibrary{calc}

\definecolor{SegA}{RGB}{50,120,200}
\definecolor{SegB}{RGB}{220,120,40}
\definecolor{Common}{RGB}{90,90,90}

\tikzset{
  baseedge/.style={black, line width=0.9pt, line cap=round},
  subedgeA/.style={SegA, line width=2.0pt, line cap=round},
  subedgeB/.style={SegB, line width=2.0pt, line cap=round},
  nodebase/.style={circle, fill=black, draw=white, line width=0.4pt, inner sep=0pt, minimum size=6.5pt},
  nodeA/.style={circle, fill=SegA, draw=white, line width=0.4pt, inner sep=0pt, minimum size=6.5pt},
  nodeB/.style={circle, fill=SegB, draw=white, line width=0.4pt, inner sep=0pt, minimum size=6.5pt},
  ringA/.style={circle, fill=none, draw=SegA, line width=1.1pt, inner sep=0pt, minimum size=13pt},
  ringB/.style={circle, fill=none, draw=SegB, line width=1.1pt, inner sep=0pt, minimum size=13pt},
  conn/.style={dashed, black!35, line width=0.5pt},
  label/.style={font=\footnotesize},
  ilabel/.style={font=\small}
}

\newcommand{\basechain}{%
  \foreach \i in {1,...,9} {\coordinate (p\i) at ({\i-1},0);}
  \draw[baseedge] (p1) -- (p2) -- (p3) -- (p4) -- (p5) -- (p6) -- (p7) -- (p8) -- (p9);
  \foreach \i in {1,...,9} {
    \node[nodebase] at (p\i) {};
    \node[label, below=4pt] at (p\i) {$p_{\i}$};
  }
}

\usepackage{comment}

\usepackage[sort&compress]{natbib}

\usepackage{url}
\usepackage[hidelinks]{hyperref}
\usepackage{doi}

\usepackage{amsthm}
\newtheorem{theorem}{Theorem}
\newtheorem{remark}{Remark}

\usepackage[normalem]{ulem} 

\usepackage{chngcntr} 

\usepackage{placeins} 

\newcommand{\mycomment}[1]{\marginpar{\tiny #1}}

\hypersetup{
  colorlinks=true,
  linkcolor=blue!60!black,
  citecolor=blue!60!black,
  urlcolor=blue!60!black
}

\lstdefinestyle{bashstyle}{
  basicstyle=\ttfamily\small,
  columns=fullflexible,
  breaklines=true,
  frame=single,
  backgroundcolor=\color{black!3}
}
\newcommand{\R}{\mathbb{R}}
\newcommand{\norm}[1]{\left\lVert #1 \right\rVert}
\newcommand{\Atheta}{M_{\theta}^{-1}}
\newcommand{\diag}{\operatorname{diag}}

\newcommand{\maybeincludegraphics}[2][]{%
  \IfFileExists{#2}{%
    \includegraphics[#1]{#2}%
  }{%
    \fbox{\begin{minipage}{0.85\linewidth}\centering
      Missing figure file:\par\texttt{\detokenize{#2}}
    \end{minipage}}%
  }%
}

\title{Overlapping Schwarz Attention:\newline Hierarchical Attention via Domain Decomposition}

\author{
  \href{https://orcid.org/0000-0003-1015-8736}{\includegraphics[scale=0.06]{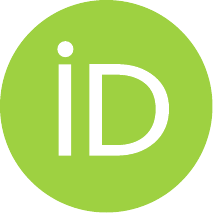}\hspace*{1mm}{Stephan Köhler}}
and
\href{https://orcid.org/0000-0002-9310-8533}{\includegraphics[scale=0.06]{orcid.pdf}\hspace*{1mm}{Oliver Rheinbach}}\\
Faculty of Mathematics and Computer Science\\ Technische Universität Bergakademie Freiberg, 09596 Freiberg, Germany
}

\date{\today}

\begin{document}

\maketitle

\begin{abstract}
We propose a hierarchical attention mechanism based on two-level overlapping Schwarz domain decomposition. The method is motivated by domain decomposition methods in partial differential equations which combine local subdomain corrections with a coarse level that communicates global, long-range information. We test its usefulness in the context of finite-dimensional operator learning using a simple, one-dimensional diffusion problem.
Although elementary, this problem provides a controlled sequence-to-sequence setting in which the exact nonlocal solution operator is known. After discretization, learning the solution operator amounts to approximating the inverse of a symmetric positive definite matrix. As a baseline, we use a global softmax-free low-rank attention operator of the form \(QK^T\). The proposed construction replaces this global factorization by a two-level additive structure: local low-rank attention blocks on overlapping subdomains are combined with a coarse attention block. The resulting operator has the form
$$
  M_{\theta}^{-1}
  =
  \Phi Q_0 K_0^T \Phi^T
  +
  \sum_{i=1}^{N}
  R_i^T D_i^{1/2} Q_i K_i^T D_i^{1/2} R_i .
$$
Here, $R_i$ restricts to an overlapping subdomain, $D_i$ is a partition-of-unity weight, and $\Phi$ is a coarse interpolation 
matrix.
Numerical experiments for synthetic Fourier right-hand sides indicate that the domain-decomposition attention operator can converge faster and can give more accurate approximations than a global low-rank attention baseline while using significantly fewer parameters.
\end{abstract}

\let\thefootnote\relax\footnotetext{
Two large language models were used independently to generate two separate code prototypes, which were cross-checked against each other. Large language models were also used for postprocessing and manuscript preparation. All generated code and text were reviewed, tested, and verified by the authors, who take full responsibility for the contents of this paper.
}

\section{Introduction}
The Transformer architecture, with its self-attention mechanism 
\citep{Vaswani2017},
has been central to the success of modern large language models.
Unlike earlier recurrent neural networks (RNNs), attention enables the direct and parallel modeling of contextual dependencies across long sequences. Formally, attention can be understood as a family of learnable interaction operators whose efficiency 
arises from the exploitation of low-rank factorizations in the computation of token interactions.

A standard self-attention layer starts from an input matrix
\[
  X \in \mathbb{R}^{m\times d_{\rm in}},
\]
where the rows correspond to $m$ tokens, grid points, or degrees of
freedom, and the $d_{\rm in}$ columns contain the associated features. The query, key,
and value matrices are obtained by learned linear mappings,
\begin{equation}
  Q = XW_Q,\qquad
  K = XW_K,\qquad
  V = XW_V,
\label{eq:QKV}
\end{equation}
where the entries of
  $W_Q,W_K\in\mathbb{R}^{d_{\rm in}\times d_k},\,
  W_V\in\mathbb{R}^{d_{\rm in}\times d_v}$
are the learnable parameters.

The standard, scaled dot-product attention operator with softmax is then
$$
\operatorname{Att}_{\rm softmax}(X)
=
\operatorname{softmax}\!\left(
  \frac{QK^T}{\sqrt{d_k}}
  \right)V,
$$
where \(d_k\) is the number of columns of $W_Q$ and $W_K$\textcolor{black}{, and {\rm softmax} is applied row-wise~\citep{Vaswani2017}}.
The matrix \(QK^T\in\mathbb{R}^{m\times m}\) contains pairwise interactions between the $m$ tokens or, in our case, (interior) grid points.
The parameter $d_k$ corresponds to the rank $r$ of the low-rank factorization $QK^T$, i.e.,
\begin{equation}
r={\rm rank}(QK^T)\le d_k.
\label{eq:rank}
\end{equation}
%
%
%
In this work, we will omit the softmax operator.
We also omit the $\sqrt{d_k}$ scaling,
which controls the magnitude of the dot products before applying softmax.
%
%
Thus, we have the linear attention operator
$$
  \operatorname{Att}(X)
  =
  \left(
    {QK^T}
  \right)V =
  \left({XW_Q W_K^TX^T}\right)XW_V.
$$
%
%
%
%
%
Our setting
allows us to focus on whether 
methods from domain decomposition for partial differential equations \citep{ToselliWidlund2005,SmithBjorstadGropp1996,Lions1988} can
be used to design more efficient attention operators.

%
%


Our model problem is a discretized one-dimensional Poisson problem.
After discretization, the solution map is the inverse of the sparse system matrix. 
A global low-rank attention layer can represent a rank-$d_k$ approximation of this inverse.
However, elliptic inverses have multiscale structure: local coupling is most important, but long-range low-frequency components must also be represented.
This is the structure exploited, e.g., by two-level Schwarz domain decomposition methods 
for partial differential equations.

In our two-level attention mechanism the fine level consists of local low-rank attention blocks on overlapping subdomains.
The coarse level consists of an attention block acting on a small interface-hat coarse basis.
The construction is not meant as a replacement for classical solvers.
Rather, it is a controlled experiment to demonstrate that domain de\-compo\-sition concepts can provide a useful structure, e.g., for operator learning methods or, potentially, natural language processing (NLP).
Well-known works in operator learning include the FEONet by~\citet{feonet2025}, the DeepONet~by \citet{Lu2021DeepONet} and the
Fourier Neural Operator (FNO)
by \citet{Li2021FNO}.
Modern overlapping Schwarz methods are highly scalable parallel methods
for solving
partial differential equations~\citep{jolivet2012scalable,heinlein2022parallel,DeparisForti2016,cai:NKS}, and may help to parallelize attention similarly to other known approaches in NLP~\citep{rectified2025,star2025,moba2025,quest2024}.



%

We note that hierarchical forms of attention have been studied previously in different settings~\citep{hierarchical2025,hierarchical2023,hierarchical2016}. These approaches are not directly comparable to the one proposed here, since both the underlying mechanisms and the intended applications are different. In the present work, the hierarchy is induced by an overlapping domain decomposition and is used to structure a softmax-free attention approximation of elliptic solution operators.
Hierarchical methods in NLP include the works by
\citet{hierarchicalnlp2024},
\citet{hierarchicalnlp2022},
\citet{hierarchicalnlp2021}, and
\citet{hierarchicalnlp2016}.



After completion of a first draft of this manuscript, we became aware of the closely related work by~\citet{choose2021}, which first
used
linear attention for operator learning. Our approach takes a further step in two respects. First, by an appropriate choice of the input matrix $X$ and the projection matrices $W_Q$, $W_K$, and $W_V$, the trainable parameters are identified directly with the factors $Q$ and $K$, so that the learned operator takes the simple 
form $QK^T$, rather than appearing only through the composite expression $XW_QW_K^TX^T$. Second, building on this interpretation, we introduce Schwarz attention, a new hierarchical attention architecture derived from a two-level overlapping Schwarz domain decomposition. It combines local low-rank attention operators with a global coarse correction and thereby transfers the central local-to-global mechanism of domain decomposition to attention-based operator learning.
In the next section, we specify the choice of $X$, $W_Q$, $W_K$, and $W_V$ that leads to this direct parametrization of the learned operator.

\section{One-Dimensional Diffusion Model Problem}
\label{sec:poisson}
We consider the Poisson problem in one dimension
\begin{equation}
  -u''(x) = f(x), \qquad x \in (0,1)\subset\mathbb{R}^1,
  \qquad u(0)=u(1)=0 .
  \label{eq:poisson-continuous}
\end{equation}
Let $n$ denote the number of uniform line elements on $[0,1]$.
Then $n+1$ is the number of grid points, and $n-1$ is the number of interior grid points. Let
\begin{equation}
  x_j = \frac{j}{n}, \qquad j=1,\ldots,(n-1) .
\end{equation}
Using the standard finite difference 
gives
\begin{equation}
  A u = f,
  \quad
  A = \frac{1}{h^2}
  \begin{pmatrix}
    2 & -1 \\
    -1 & 2 & -1 \\
       & \ddots & \ddots & \ddots \\
       &        & -1 & 2 & -1 \\
       &        &    & -1 & 2
  \end{pmatrix}
   \in\mathbb{R}^{(n-1)\times(n-1)},
  \qquad
  h=\frac{1}{n} .
  \label{eq:poisson-discrete}
\end{equation}
The exact discrete solution operator is therefore
$
  f \mapsto u=A^{-1}f .
$

The learning task in this paper is to approximate \(A^{-1}\) by a trainable structured operator $\Atheta$,  where $\theta$ denotes the parameters.
%
%
\textcolor{black}{For our one-dimensional model problem, we represent
$(n-1)$ active degrees of freedom
by a one-hot encoding, rather than by their physical coordinates, and augment this representation by the right-hand side $f$; i.e., our feature matrix is}
$$
X= [I_{n-1}, f]=\left(
\begin{array}{cccc|c}
1 & 0 & \ldots     &  0  & f_1\\
0 &  \ddots &  \ddots &  \vdots & f_2 \\
\vdots & \ddots & \ddots & 0 & \vdots \\
0  &   \ldots     & 0 &  1 & f_{n-1}\\
\end{array}
\right)
\in \mathbb{R}^{(n-1)\times n},
$$
where $I_{n-1}$ is the identity matrix of size $n-1$.
Thus, $d_{\rm in}=n$, i.e., the first $n-1$ columns identify the (interior) grid points, while the last column
contains the values of the right-hand side $f$.

Next, the weight matrix $W_V$ is chosen as a fixed matrix which extracts the right-hand side channel from $X$, 
\[
  V = XW_V = f \in \mathbb{R}^{n-1},
  \qquad
  W_V^T =
  (
    0,
    \ldots
    0,
    1
  )
  \in \mathbf{R}^{n}
  .
\]
As a result, in our context, $W_V$ does not contain learnable parameters but is a fixed matrix.


We then define $W_Q$ and $W_K$ by
$$
  W_Q =
  \begin{pmatrix}
    \Theta_Q \\
    0
  \end{pmatrix},
  \qquad
  W_K =
  \begin{pmatrix}
    \Theta_K \\
    0
  \end{pmatrix},
$$
where the matrices
$\Theta_Q$
and
$\Theta_K$
contain the learnable parameters in $W_Q$ and $W_K$.
Here, \(\Theta_Q,\Theta_K\in\mathbb{R}^{{n-1}\times d_k}\),
and the zeros in \(W_Q\) and \(W_K\) denote
one zero row. 
%
The identity block in $X$ resulting from the one-hot coding of the grid points then extracts $\Theta_Q$ and $\Theta_K$ from $W_{Q}$ and $W_{K}$, i.e.,
\[
  {Q}=XW_Q=\Theta_Q,
\qquad
  {K}=XW_K=\Theta_K. 
  \]
Now, the entries of $Q$ and $K^T$ are the learnable parameters.


As a result of our definitions, 
in the softmax-free linear setting considered in this paper,
the low rank factorization $QK^T$ is applied directly to the right-hand side, i.e.,
defining $u_\theta=\operatorname{Att}(X)$, we obtain
\begin{equation}
  u_\theta
  =
  \left(
    {QK^T}
  \right)V
  =
  QK^T f.
  \label{eq:utheta}
\end{equation}
The learning task is therefore to find the
attention factors such that
\begin{equation}
  QK^T \approx A^{-1},
\label{eq:learn}
\end{equation}
where \(A\) is the finite-difference or finite element Poisson matrix.

\begin{remark}

Although our numerical experiments are carried out for a one-dimensional elliptic model problem, the construction should not be viewed as being tied to this particular problem setting.
Rather, the model problem provides a controlled setting in which a nonlocal sequence-to-sequence operator is known exactly and can therefore be used to study the effect of hierarchical, domain-decomposition-induced attention.
The introduction of hierarchy in attention has been an active area of research and has allowed long contexts in LLMs; also see the discussion in Section~\ref{sec:discussion}.
Thus, our study may be
relevant more broadly for one-dimensional sequence models, where local interactions and long-range couplings have to be represented simultaneously.
%
The application of overlapping Schwarz attention to NLP is future
work. Recent work has investigated the decay of token influence with
distance in NLP~\citep{green2026}, helping to bridge the gap to settings in
partial differential equations where such decay is known.
\end{remark}




\section{Softmax-Free Attention as Low-Rank Approximate Inverse}

The baseline for our hierarchical attention is the global low-rank linear attention applied to operator learning, 
\begin{equation}
  M_{\theta,\mathrm{global}}^{-1} = QK^T,
  \qquad
  Q,K \in \R^{(n-1)\times r_g} .
  \label{eq:global-attention}
\end{equation}
Applied to a right-hand side $f$, the predicted solution is
\begin{equation}
  u_\theta = QK^T f;
\end{equation}
see (\ref{eq:utheta}).
The operator's rank is limited by the rank parameter denoted $r_g$.

\begin{remark}
Since $A^{-1}$ is symmetric positive definite, one could restrict to a symmetric factorization $BB^T$.
We deliberately do not impose this constraint here.
Both the global and the domain-decomposition models use independent factors $Q$ and $K$.
This gives the optimizer a less constrained parameterization while still allowing the learned operator to approximate a symmetric inverse.
\end{remark}
\begin{remark}
Although the approximate inverse is written as $QK^T$, the dense matrix \(QK^T\) is typically not formed explicitly. In all computations, the action on a right-hand side $f$ can be evaluated as
$
   Q(K^T f),
$
which costs \(O((n-1)r_g)\) for a single right-hand side instead of explicitly forming the
$(n-1)\times (n-1)$ matrix $(QK^T)$. 
\end{remark}

\section{Hierarchical Attention by Two-Level Overlapping Schwarz Domain Decomposition}

\subsection{Overlapping Subdomains and Coarse Hat Functions}
\label{sec:hat}

The subdomain construction used in our implementation is based on a decomposition of the edges of the one-dimensional chain graph. Each edge corresponds to a line element connecting two neighboring degrees of freedom. Thus, the partition is element-based rather than degree-of-freedom-based. Consequently, neighboring nonoverlapping subdomains share an interface degree of freedom before overlap is added.
This follows the standard finite element viewpoint.
%
%
Figure~\ref{fig:nonoverlapping} shows the nonoverlapping partition of the line elements resulting in two index sets $\mathcal{I}_1$ and $\mathcal{I}_2$ defining two nonoverlapping subdomains and the
interface.

In standard overlapping Schwarz methods, often generous overlap is introduced to reduce boundary effects.
In all our numerical experiments, however, we use a relatively small overlap of \(n_\delta=2\) since this seemed to be sufficient.
Hence, two neighboring overlapping subdomains have $2n_\delta+1=5$ common grid points;
Figure~\ref{fig:overlaptwo} illustrates the corresponding overlapping index sets for overlap two.

Let $\mathcal I_i$ be the index set associated with the $i$-th overlapping subdomain
and $n_i=|\mathcal I_i|$. Then $R_i\in\mathbb R^{n_i\times (n-1)} $ denotes the Boolean restriction matrix that extracts the degrees of freedom in \(\mathcal I_i\) from a global vector in \(\mathbb R^{n-1}\). Equivalently, \(R_i u\) is the subvector of \(u\) associated with the overlapping subdomain \(\mathcal I_i\), while \(R_i^T\) extends a local vector by zero to the global index set.

{\color{black}
The coarse space used in our experiments is spanned by interface-centered hat functions. More precisely, the coarse degrees of freedom are placed at the interface nodes of the nonoverlapping partition, and the corresponding hat functions attain their maxima at these nodes while decreasing linearly in the adjacent subdomains.

We also investigated piecewise constant partition-of-unity basis functions. Such basis functions perform a uniform averaging over each subdomain and are closely related to mean pooling, one of the simplest token aggregation mechanisms in Transformer models~\citep{meanpooling}. They are also conceptually related to the coarse token representations employed in hierarchical Transformer architectures such as MoBA and FCA~\citep{moba2025,zhao2022fine}.
In contrast, the interface-centered hat basis performs a distance-weighted
averaging. 
Since these alternative coarse bases did not outperform the interface-hat basis in our experiments, they are not considered further.
}

\begin{figure}[t]
\centering

\begin{subfigure}[t]{0.48\textwidth}
\centering
\begin{tikzpicture}[line join=round, scale=0.72, transform shape]
  \basechain

  \foreach \i in {1,...,5} {\coordinate (a\i) at ({\i-1},1.10);}
  \foreach \i in {5,...,9} {\coordinate (b\i) at ({\i-1},1.85);}

  \draw[subedgeA] (a1) -- (a2) -- (a3) -- (a4) -- (a5);
  \draw[subedgeB] (b5) -- (b6) -- (b7) -- (b8) -- (b9);

  \foreach \i in {1,...,5} {\node[nodeA] at (a\i) {};}
  \foreach \i in {5,...,9} {\node[nodeB] at (b\i) {};}

  \node[ringA] at (a5) {};
  \node[ringB] at (b5) {};
  \draw[conn] (p5) -- (a5);
  \draw[conn] (p5) -- (b5);

  \node[ilabel, SegA, above=5pt] at ($(a1)!0.5!(a5)$) {$\mathcal{I}_{1}$};
  \node[ilabel, SegB, above=5pt] at ($(b5)!0.5!(b9)$) {$\mathcal{I}_{2}$};
\end{tikzpicture}
\caption{Nonoverlapping edge partition. The node index sets meet in the interface $p_5$.}
\label{fig:nonoverlapping}
\end{subfigure}
\hfill
\begin{subfigure}[t]{0.48\textwidth}
\centering
\begin{tikzpicture}[line join=round, scale=0.72, transform shape]
  \basechain

  \foreach \i in {1,...,7} {\coordinate (a\i) at ({\i-1},1.10);}
  \foreach \i in {3,...,9} {\coordinate (b\i) at ({\i-1},1.85);}

  \draw[subedgeA] (a1) -- (a2) -- (a3) -- (a4) -- (a5) -- (a6) -- (a7);
  \draw[subedgeB] (b3) -- (b4) -- (b5) -- (b6) -- (b7) -- (b8) -- (b9);

  \foreach \i in {1,...,7} {\node[nodeA] at (a\i) {};}
  \foreach \i in {3,...,9} {\node[nodeB] at (b\i) {};}

  \foreach \i in {3,4,5,6,7} {
    \node[ringA] at (a\i) {};
    \node[ringB] at (b\i) {};
  }

  \draw[conn] (p5) -- (a5);
  \draw[conn] (p5) -- (b5);

\end{tikzpicture}
\caption{Overlap two: each subdomain is enlarged by two neighboring graph edges. The common nodes are $p_3,\ldots,p_7$; the original interface node is $p_5$.}
\label{fig:overlaptwo}
\end{subfigure}

\caption{Nonoverlapping and overlapping node index sets for two subdomains.}
\label{fig:subdomain-overlap}
\end{figure}

\subsection{Partition of Unity in the Overlap}

In the overlap, several subdomains may cover the same degree of freedom.
Let \(m_j\) be the number of overlapping subdomains containing grid point \(j\).
On subdomain \(i\), define the diagonal weight matrix
\begin{equation}
  D_i = \diag\left({1}/{m_j}: j\in\mathcal I_i\right) .
\end{equation}
In the decompositions used in the experiments below, each grid point in an overlap region is covered by two neighboring subdomains, so that $m_j=2$ there, while $m_j=1$ outside the overlap regions.

Then the weights form a discrete partition of unity:
$
  \sum_{i=1}^{N} R_i^T D_i R_i = I_{n-1} .
$
In the experiments below we use the symmetric overlap weighting
\begin{equation}
  R_i^T D_i^{1/2} G_i D_i^{1/2} R_i .
\end{equation}

\subsection{Local Attention Blocks}

On each of the $N$ overlapping subdomains, we use a local low-rank attention operator
\begin{equation}
  G_i = Q_iK_i^T,
  \qquad
  Q_i,K_i\in\R^{n_i\times r_\ell},
\end{equation}
where $r_\ell$ denotes the rank of the local attention blocks.
%
The corresponding fine-level operator is
\begin{equation}
  M_{\theta,\mathrm{loc}}^{-1}
  =
  \sum_{i=1}^{N}
  R_i^T D_i^{1/2} Q_iK_i^T D_i^{1/2} R_i .
  \label{eq:local-schwarz-attention}
\end{equation}
This operator is sparse in the 
sense that
each learned block acts only on an overlapping subdomain.

\subsection{Coarse Attention Block}

In Poisson problems, local blocks alone cannot represent global low-frequency components.
We therefore add a coarse attention space.
Let
\begin{equation}
  \Phi \in \R^{(n-1)\times n_0}
\end{equation}
be a coarse interpolation matrix, i.e., $\Phi$ interpolates coarse coefficients to the fine grid.
Its transpose $\Phi^T$ is used as the corresponding restriction to the coarse space.

In the experiments below, \(\Phi\) consists of interface hat functions{\color{black}; see Figure~\ref{fig:operator-matrices} in the lower rightmost panel}.
For \(N\) subdomains this gives \(n_0=N-1\) coarse basis functions,
one associated with each pair of neighboring disjoint index sets.

We denote the rank of the coarse attention by $r_0$.
The coarse attention operator is
\begin{equation}
  G_0 = Q_0K_0^T,
  \qquad
  Q_0,K_0\in\R^{n_0\times r_0} .
\end{equation}
Interpolated to the fine grid, the coarse contribution is
\begin{equation}
  M_{\theta,\mathrm{coarse}}^{-1} = \Phi Q_0K_0^T\Phi^T .
  \label{eq:coarse-attention}
\end{equation}

\subsection{Two-Level Additive Hierarchical Attention Mechanism}

Combining the fine and coarse levels gives the proposed hierarchical attention operator
\begin{equation}
  M_{\theta,\mathrm{Schwarz}}^{-1}
  =
  \Phi Q_0K_0^T\Phi^T
  +
  \sum_{i=1}^{N}
  R_i^T D_i^{1/2} Q_iK_i^T D_i^{1/2} R_i.
  \label{eq:dd-attention}
\end{equation}
This is the central construction of the paper.
It can be interpreted as a two-level additive Schwarz-inspired attention layer.
The term ``hierarchical'' refers to the coexistence of local fine-scale attention blocks and a global coarse attention mechanism.
We refer to the proposed architecture as two-level overlapping Schwarz attention, or Schwarz attention for short.


\section{Training Procedure}
\subsection{Loss Function}

For a batch of right-hand sides \(f^{(b)}\), the exact solutions are computed as
$
u^{(b)} = A^{-1} f^{(b)} .
$
The model prediction is
$
u_\theta^{(b)} = \Atheta f^{(b)} .
$
A plain mean-squared error (MSE) tends to underweight examples whose true solutions have small amplitude.
For Poisson problems, high-frequency right-hand sides produce solutions whose amplitudes are damped by the inverse Laplacian.
We therefore use a sample-wise weighted MSE (wMSE)
%
%
\begin{equation}
  \mathcal L_{\mathrm{wMSE}}(\theta)
  =
  \frac{1}{B}
  \sum_{b=1}^{B}
  \left((n-1)^{-1}\norm{u_\theta^{(b)}-u^{(b)}}_2^2\right)/
       \left(\max\left\{(n-1)^{-1}\norm{u^{(b)}}_2^2,\varepsilon\right\}\right).
  \label{eq:weighted-mse}
\end{equation}
where $B$ denotes the batch size and $u_\theta^{(b)}$ and $u^{(b)}$
denote the predicted and exact solutions for sample $b$, respectively.
All our experiments use this loss, setting $\varepsilon=1e-30$.
Without this weighting, good approximations are not achieved for high-frequency right-hand sides. 
Unless stated otherwise, all experiments use a batch size of $B=256$.
\subsection{Initialization of the Attention Factors}
\label{sec:init-fair-comparison}
\label{sec:init-random}

{
For the random initialization we use a resolution- and rank-aware scaling of the factor matrices.
The entries of the trainable factors $Q$ and $K$ are first drawn randomly. An entry of $QK^T$ is an inner product of length $r$ of a row of $Q$ with a row of $K$, see~\eqref{eq:rank}.

We then multiply both factors by
$
\left({h}/{4}\right)^{1/2} r^{-1/4},
h=\frac1n.
$
Thus the product $QK^T$ is scaled by
$
\left({h}/{4}\right)r^{-1/2}.
$
The factor $r^{-1/2}$ compensates the growth of the random inner products with the rank, while the factor $h$ matches the natural size of the inverse one-dimensional Laplacian, whose largest entries are of order $h/4$.

}

\subsection{Synthetic Right-Hand Sides}

The training data are generated on the fly; no finite stored training set
is used.  We sample right-hand sides from a mixed Fourier family and
normalize each generated right-hand side to unit Euclidean norm.  The
experiments use the first $16$ sine modes and
also the first $16$ cosine modes on the interior grid points.
The mixed Fourier generator combines two types of right-hand sides in
each batch.  One half of the batch consists of pure Fourier modes: a
single normalized sine or cosine mode is selected and multiplied by a
random sign.  These pure modes are not damped by a decay factor.  The
other half consists of random Fourier combinations.  In this part, sine
and cosine modes $m=1,\ldots,16$ are combined, with higher modes damped by
the factor $m^{-1.5}$.  After concatenating the two parts, the batch is
shuffled and every right-hand side is normalized.

The global attention model and the hierarchical
domain-decomposition attention model see the same sequence of training
right-hand sides.  
%
%
We use $16$ evaluation right-hand sides in the solution plots; they are generated once with a fixed test seed and are then used for all models.
%
%
For each generated right-hand side, the reference
solution is computed by applying the exact discrete inverse,
$$
  u^{(b)} = A^{-1} f^{(b)}.
$$
Section~\ref{app:rhs} of the Appendix
defines the generator used for our mixed Fourier right-hand sides.

%


\begin{remark}
The distribution of right-hand sides is an essential part of our operator-learning
problem. A low-rank attention operator cannot be expected to approximate the
inverse Poisson operator uniformly well for all possible right-hand sides if the
chosen ranks are much smaller than the problem dimension.
%
The mixed Fourier distribution used here is therefore a controlled test class.
The goal is not to learn a uniformly accurate inverse on the complete space $\mathbb{R}^{n-1}$, but to compare global low-rank
attention and Schwarz attention on a given 
class of right-hand sides with both local and global structure.
\end{remark}

\section{First Experimental Setup}
\label{sec:exp}
Our first numerical experiments consist of learning-rate sweeps for a
problem size of $n=256$ line elements, corresponding to $255$ active degrees of freedom,
and \(N=8\) subdomains;
we always use an overlap of $n_\delta=2$, and the local ranks are set to $r_\ell=4$, the coarse rank is set to $r_0=N-1=7$. The coarse basis is the interface-hat basis; see Section~\ref{sec:hat}.

For \(n=256\), \(N=8\), and overlap two, the nonoverlapping
partition consists of eight subdomains with $32$ line elements each. After
adding two overlap elements on each interior side, the six interior
overlapping subdomains contain 36 elements, corresponding to 37 active
degrees of freedom. The two boundary subdomains contain 34 elements each;
after eliminating the Dirichlet boundary values, they contain 34 active
degrees of freedom.
\textcolor{black}{Hence $\sum_{i=1}^{8} n_i = 2\cdot 34 + 6\cdot 37 = 290$. With local rank \(r_\ell=4\), the local factors therefore contain $2r_\ell \sum_{i=1}^{8} n_i = 2\cdot 4\cdot 290 = 2320$ trainable parameters. The coarse space has \(N-1=7\) interface hat functions. Using rank \(r_0=7\) for the coarse attention block adds $2(N-1)r_0 = 2\cdot 7\cdot 7 = 98$ parameters. Thus, the total number of trainable parameters for the overlapping Schwarz attention is \(2418\).}

We use AdamW~\citep{loshchilov2019decoupled} with zero weight decay, $\beta_1=0.9$,
$\beta_2=0.999$, and $\varepsilon=10^{-8}$ and batch size $256$.
Since the weight decay is zero, the resulting updates are equivalent
to those of Adam.
We use a \texttt{ReduceLROnPlateau} scheduler with reduction factor $0.5$
and patience 200.

\subsection{Learning-Rate Sweep for the Overlapping Schwarz Attention}
\begin{figure}[h!]
  \centering
  \begin{subfigure}{0.42\linewidth}
    \centering
    \includegraphics[width=\linewidth]{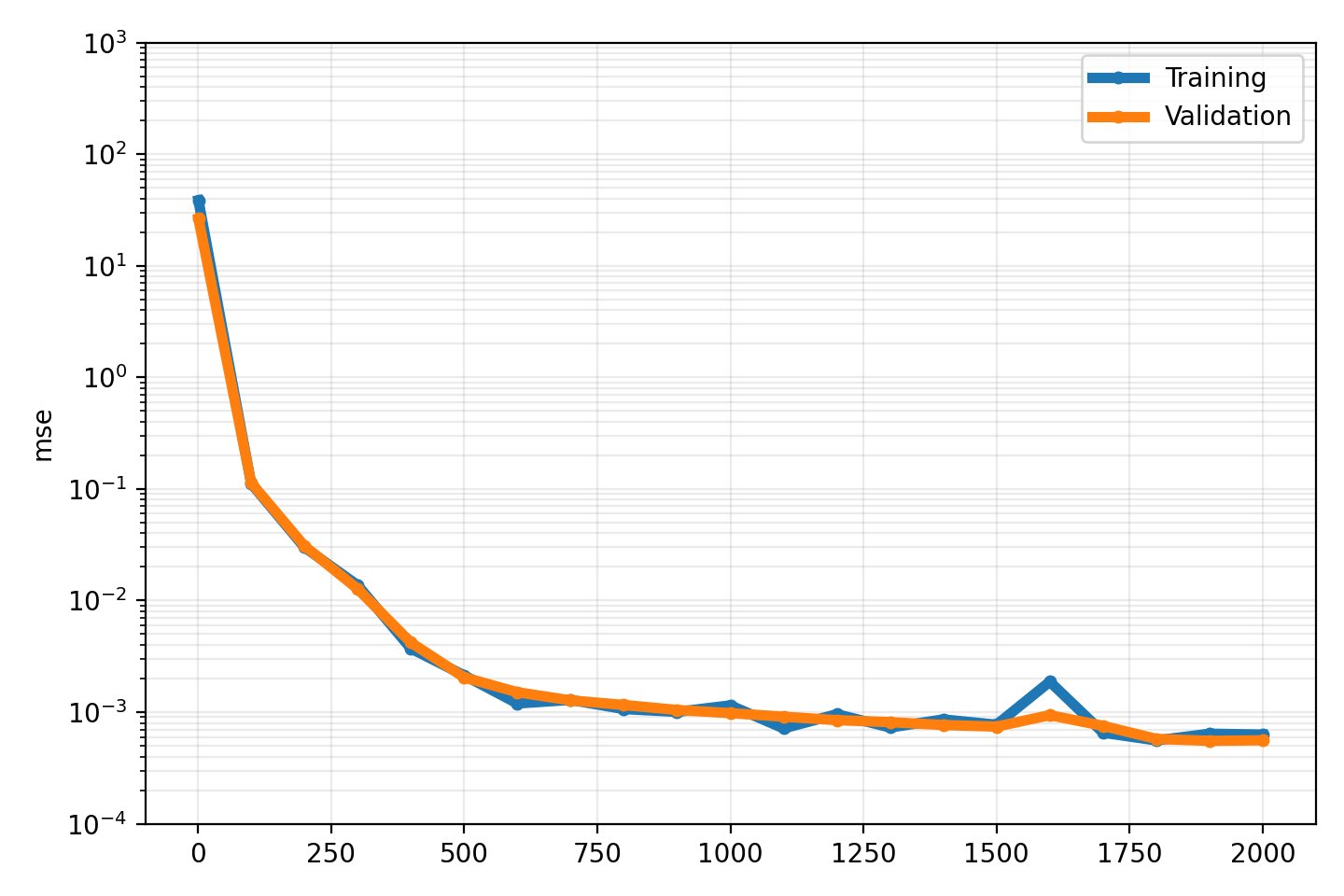}
    \caption{$\eta=10^{-3}$}
  \end{subfigure}
  \hfill
  \begin{subfigure}{0.42\linewidth}
    \centering
    \includegraphics[width=\linewidth]{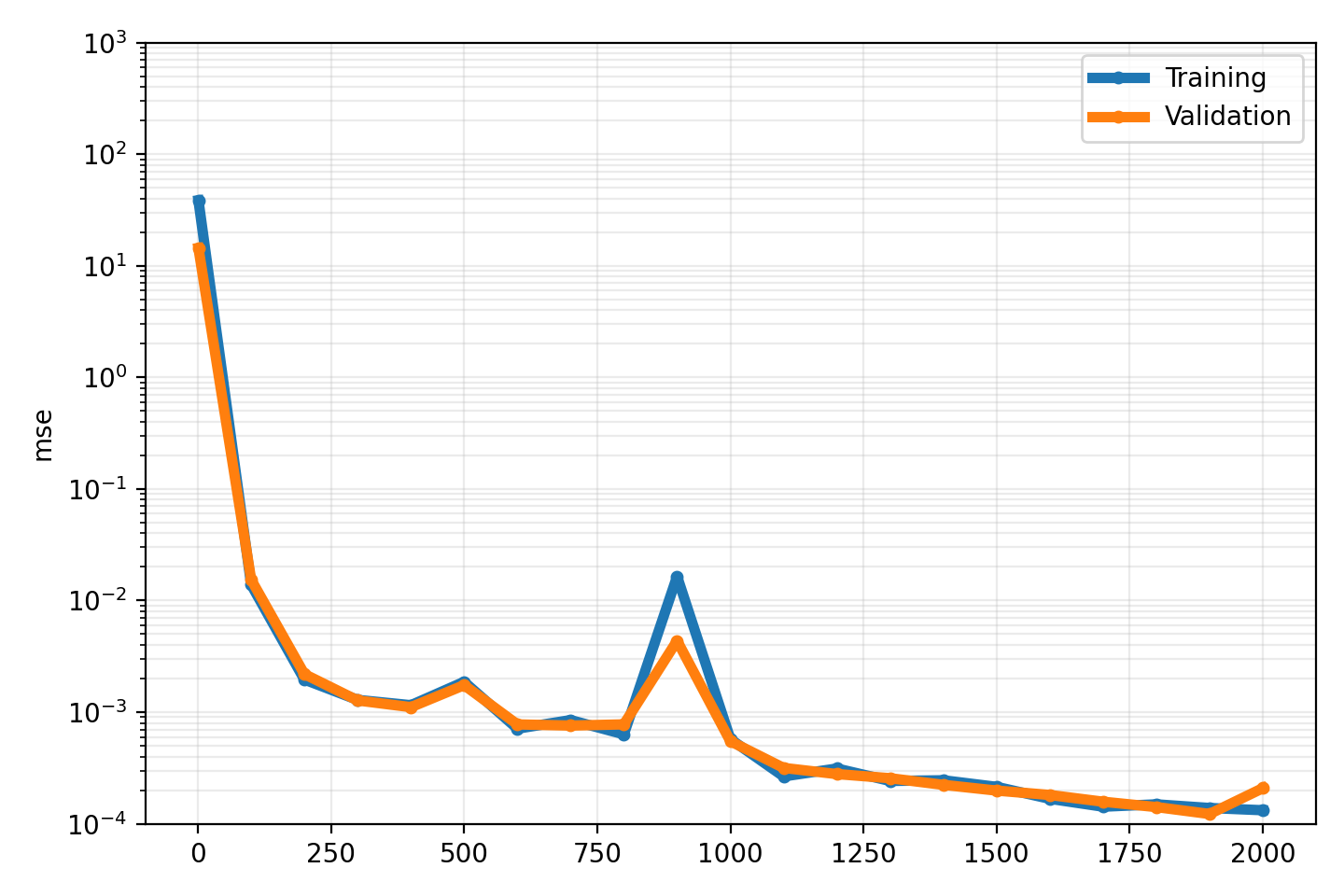}
    \caption{$\eta=3\cdot 10^{-3}$}
  \end{subfigure}
  \caption{
  Representative learning-rate histories for the two-level overlapping Schwarz
  attention.  The conservative choice $\eta=10^{-3}$ reaches the same low-error
  regime as the more aggressive choice $\eta=3\cdot 10^{-3}$, while avoiding
  the strongest transient fluctuations.
  The full learning-rate sweep for Schwarz attention is reported in
Section~\ref{app:lrsweep} of the Appendix. The comparison with global attention over the complete sweep is given in Section~\ref{app:lrsweepcomp}.
  }
  \label{fig:lr-representative}
\end{figure}

Figure~\ref{fig:lr-representative} shows two representative
histories. 
The more aggressive value $\eta=3\cdot 10^{-3}$ attains one
of the lowest losses in the sweep, but exhibits a visible 
spike. We decided to use $\eta=10^{-3}$ as a conservative default in the following experiments.
The full learning-rate sweep
    is reported in Section~\ref{app:lrsweep} of the Appendix; the comparison with global attention over
the complete sweep is given in Section~\ref{app:lrsweepcomp}.






\subsection{Comparison of Schwarz Attention and Global Attention}

\begin{table}[t]
  \centering
  \begin{tabular}{llrrrr}
    \toprule
    Learning  & Model & Final & Final  & $\|M^{-1}_{\theta}\!-\!A^{-1}\|_F$ & $\displaystyle
\min_{\operatorname{rank}(B)\le r}
\lVert B-A^{-1}\rVert_F$ \\
    rate \(\eta\) &  & train\,wMSE &  val.\,wMSE &  &  \\

    \midrule
    $10^{-4}$ & Schwarz attention & $2.587\cdot 10^{-2}$ & $2.612\cdot 10^{-2}$ & $0.0712$ & $2.488\cdot 10^{-4}$ \\
    $10^{-4}$ & Global attention, rank 5 & $6.149\cdot 10^{-1}$ & $6.086\cdot 10^{-1}$ & $0.263$ & $4.504\cdot 10^{-3}$ \\
    $10^{-4}$ & Global attention, rank 39 & $5.466\cdot 10^{-2}$ & $5.403\cdot 10^{-2}$ & $0.244$ & $2.488\cdot 10^{-4}$ \\[.3ex]
    $3\cdot 10^{-4}$ & Schwarz attention & $1.408\cdot 10^{-3}$ & $1.270\cdot 10^{-3}$ & $0.0697$ & $2.488\cdot 10^{-4}$ \\
    $3\cdot 10^{-4}$ & Global attention, rank 5 & $3.150\cdot 10^{-1}$ & $3.151\cdot 10^{-1}$ & $0.294$ & $4.504\cdot 10^{-3}$ \\
    $3\cdot 10^{-4}$ & Global attention, rank 39 & $2.564\cdot 10^{-3}$ & $2.549\cdot 10^{-3}$ & $0.226$ & $2.488\cdot 10^{-4}$ \\[.3ex]
    $10^{-3}$ & Schwarz attention & $6.293\cdot 10^{-4}$ & $5.594\cdot 10^{-4}$ & $0.0585$ & $2.488\cdot 10^{-4}$ \\
    $10^{-3}$ & Global attention, rank 5 & $2.419\cdot 10^{-1}$ & $2.429\cdot 10^{-1}$ & $0.193$ & $4.504\cdot 10^{-3}$ \\
    $10^{-3}$ & Global attention, rank 39 & $8.630\cdot 10^{-4}$ & $8.527\cdot 10^{-4}$ & $0.154$ & $2.488\cdot 10^{-4}$ \\[.3ex]
    $3\cdot 10^{-3}$ & Schwarz attention & $1.321\cdot 10^{-4}$ & $2.082\cdot 10^{-4}$ & $0.0474$ & $2.488\cdot 10^{-4}$ \\
    $3\cdot 10^{-3}$ & Global attention, rank 5 & $2.477\cdot 10^{-1}$ & $2.326\cdot 10^{-1}$ & $0.11$ & $4.504\cdot 10^{-3}$ \\
    $3\cdot 10^{-3}$ & Global attention, rank 39 & $3.102\cdot 10^{-4}$ & $3.189\cdot 10^{-4}$ & $0.116$ & $2.488\cdot 10^{-4}$ \\[.3ex]
    $10^{-2}$ & Schwarz attention & $1.604\cdot 10^{-4}$ & $2.258\cdot 10^{-4}$ & $0.0462$ & $2.488\cdot 10^{-4}$ \\
    $10^{-2}$ & Global attention, rank 5 & $2.391\cdot 10^{-1}$ & $2.361\cdot 10^{-1}$ & $0.0722$ & $4.504\cdot 10^{-3}$ \\
    $10^{-2}$ & Global attention, rank 39 & $3.097\cdot 10^{-2}$ & $2.980\cdot 10^{-2}$ & $0.265$ & $2.488\cdot 10^{-4}$ \\[.3ex]
    $3\cdot 10^{-2}$ & Schwarz attention & $6.487\cdot 10^{-4}$ & $5.899\cdot 10^{-4}$ & $0.0733$ & $2.488\cdot 10^{-4}$ \\
    $3\cdot 10^{-2}$ & Global attention, rank 5 & $2.489\cdot 10^{-1}$ & $2.548\cdot 10^{-1}$ & $0.144$ & $4.504\cdot 10^{-3}$ \\
    $3\cdot 10^{-2}$ & Global attention, rank 39 & $6.592\cdot 10^{-1}$ & $6.567\cdot 10^{-1}$ & $0.868$ & $2.488\cdot 10^{-4}$ \\
    \bottomrule
  \end{tabular}
 \caption{Learning-rate sweep summary for $n=256$ and $N=8$.
Each row reports the last logged training wMSE, validation wMSE, and
absolute operator Frobenius error after 2000 training steps. The final
column gives the Frobenius error of the best rank-$r$ approximation of
$A^{-1}$.
Here, $r=5$ for the global rank-$5$ model and $r=39$ for the global
rank-$39$ model; for Schwarz attention,
$r=R_{\mathrm{total}}=39$ is the corresponding rank bound.
}
  \label{tab:lr-comp-long}
\end{table}

  \begin{figure}[t]
    \centering
    \includegraphics[width=.48\linewidth]{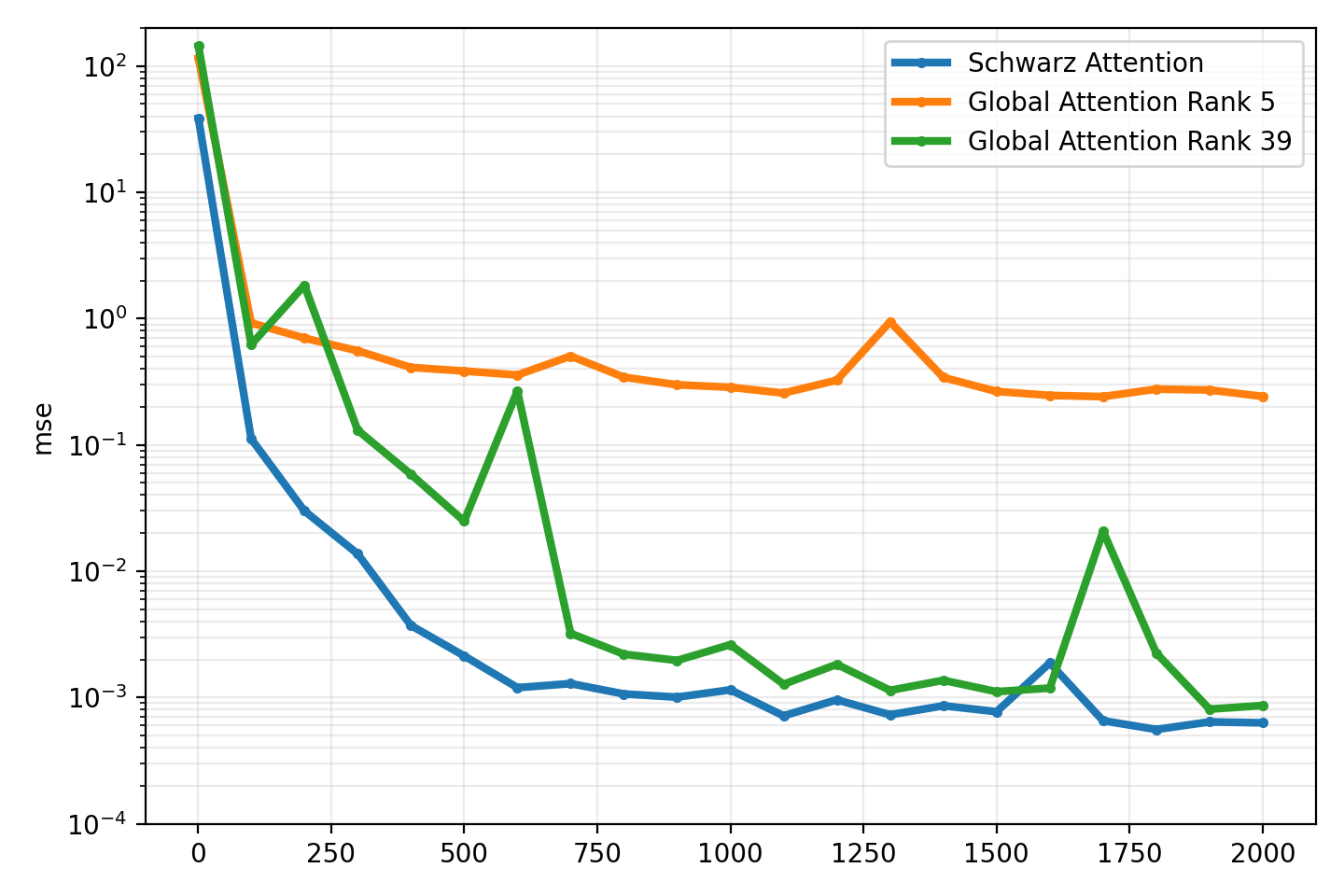}
    \caption{Comparison of the Schwarz attention with
    the
    standard global attention for $n=256$, $N=8$, and $\eta=10^{-3}.$
    The full set of training curves is shown in Section~\ref{app:lrsweepcomp} of the Appendix.}
    \label{fig:comp_1e-3}
  \end{figure}

We now perform a comparison of the Schwarz attention with the standard global attention.
The rank of the global attention operator, which is our baseline, is set to $r_g=39.$
This is a fair comparison with respect to the rank of the operator since the rank of the Schwarz attention operator is bounded by $Nr_\ell+r_0=8\times 4+7=39$.
However, the Schwarz attention has only $2418$ trainable parameters; see~Section~\ref{sec:exp}.
%
\textcolor{black}{The corresponding global low-rank attention with rank \(39\) has $2\cdot 255\cdot 39 = 19890$ trainable parameters, since the two Dirichlet boundary values are eliminated and the global operator acts on \(255\) active degrees of freedom, i.e., more than eight times as many parameters as the Schwarz attention model. A parameter-comparable global baseline is obtained by choosing rank \(5\) for the global attention, which
gives $2\cdot 255\cdot 5 = 2550$ parameters. This is close to the \(2418\) parameters of the Schwarz attention model and is therefore the parameter-matched global baseline.}



Figure~\ref{fig:comp_1e-3}
compares the Schwarz attention model with our two global low-rank baselines
using our default learning rate $\eta=10^{-3}.$
%
The Schwarz attention reaches a lower error level than both global low-rank baselines.  The global rank-5 baseline is roughly comparable in parameter count but stagnates at a much higher loss, whereas the substantially larger global rank-39 baseline still does not match the Schwarz model. The complete learning-rate sweep is given in
Section~\ref{app:lrsweepcomp} of the Appendix;
it shows that the Schwarz attention consistently outperforms the global attention.

Table~\ref{tab:lr-comp-long} summarizes the final values after 2000 optimization steps for all learning rates in the sweep. In addition to the final training and validation wMSE, we report the 
operator error
$
{|M_\theta^{-1}-A^{-1}|_F}.
$
This quantity measures the approximation quality of the learned inverse operator itself. 
The final column reports the Frobenius error of the best rank-$r$
approximation of $A^{-1}$. Since $A^{-1}$ is symmetric positive
definite, this approximation can equivalently be obtained by truncating
its spectral decomposition.

The results show that the ordering observed in the wMSE curves is also reflected at the operator level.
The parameter-matched global rank-5 model remains far from the operator baseline and yields substantially larger Frobenius errors. Increasing the global rank to 39 improves the global model considerably, but the Schwarz attention model still attains smaller operator errors over the relevant range of learning rates. This indicates that the advantage of the Schwarz construction is not only visible on sampled right-hand sides, but also in the learned operator approximation.
%
%
The best values for Schwarz attention are obtained in the moderate range around $10^{-3}$ to $10^{-2}$. 


\begin{remark}
We emphasize that the Frobenius error of the full operator is
reported in Table~\ref{tab:lr-comp-long} only as an additional diagnostic quantity.
It is different from the training objective, which minimizes a sample-wise weighted forward error.
In particular, the learned operators are not computed as best rank-$r$
approximations of $A^{-1}$ in the Frobenius norm.
Hence a comparatively large Frobenius error does not contradict small relative solution errors on the right-hand sides used for evaluation.
\end{remark}
These results indicate that the domain-decomposition structure is highly effective for learning the inverse Poisson operator in our context. The combination of local low-rank attention blocks with a coarse attention space yields both higher approximation accuracy and substantially improved parameter efficiency.


For the comparison in the next sections, we use the conservative learning rate $\eta=10^{-3}$.

\subsection{Approximation Quality on Fixed Visualization Examples}


\begin{figure}[t]
\begin{subfigure}{0.325\textwidth}
\includegraphics[width=.999\linewidth]{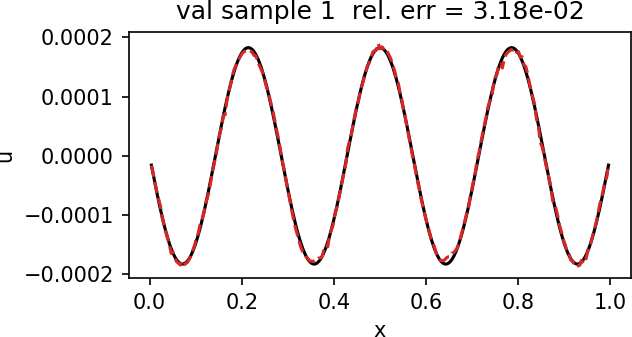}
\caption{Overlapping Schwarz attention.}
\end{subfigure}
\begin{subfigure}{0.325\textwidth}
\includegraphics[width=.999\linewidth]{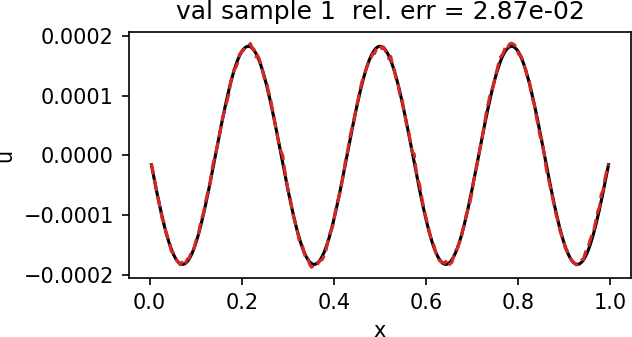}
\caption{Global attention with rank $r=39$.}
\end{subfigure}
\begin{subfigure}{0.325\textwidth}
\includegraphics[width=.999\linewidth]{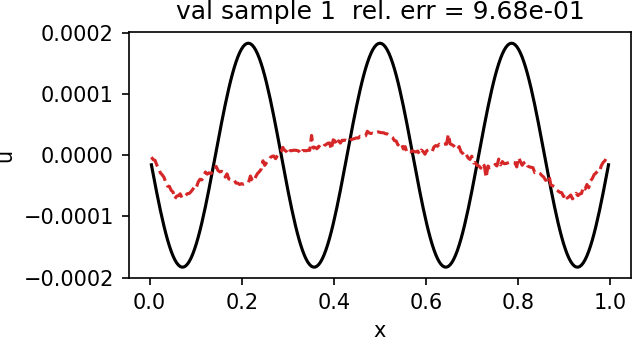}
\caption{Global attention with rank $r=5$.}
\end{subfigure}
\caption{
Sample solutions for (a) Schwarz attention, (b) global attention with rank $39$, and (c) global attention with rank $5$ after
2000 iterations with learning rate $\eta=0.001$. In each panel, the exact solution is plotted in black and the model prediction as a dashed red line.
Additional sample solutions are shown in Section~\ref{app:visual} of the Appendix.
}
\label{fig:visualcrop}
\end{figure}

Figure~\ref{fig:visualcrop} shows an example of the learned approximations for the overlapping Schwarz attention, global attention with rank 39, and global attention with rank 5.
While the Schwarz attention and the rank-39 global attention perform well, the parameter-matched rank-5 global attention shows poor approximation quality in this example.

\section{Increasing the Number of Local Attention Blocks}
\begin{figure}[ht]
  \centering
  \includegraphics[width=.89\linewidth]{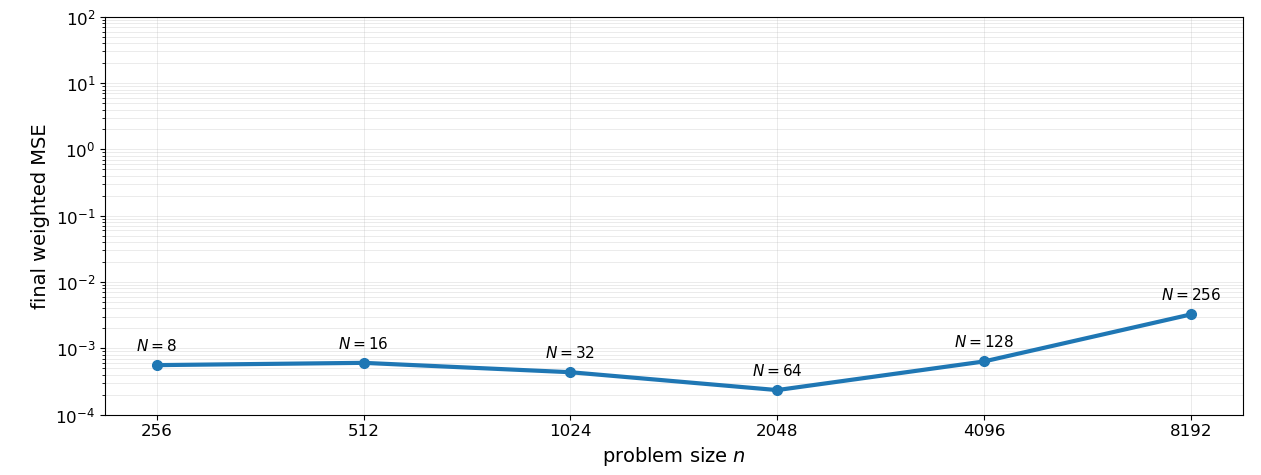}
\caption{Final weighted MSE in the subdomain-scaling experiment.
The problem size $n$ and the number of subdomains $N$ are increased
proportionally, while the local rank is kept fixed at $r_\ell=4$.
The error remains in the range $10^{-4}$--$10^{-3}$ up to $n=4096$
and increases for the largest case $n=8192$.
}
\label{fig:figweak_}
\end{figure}
%
%
%
%
%
%
%
\begin{table}[t]
\centering
\begin{tabular}{r r r r r r r r }
\toprule
 n & N & \#Params & Final\,val.\,wMSE & $\|M^{-1}_{\theta}\!-\!A^{-1}\|_F$ & $\displaystyle\min_{\operatorname{rank}(B)\le r} \| B-A^{-1}\|_F$ & Rank bound $R_{\rm total}$\\

 \midrule
    256 & 8 & 2418 & $5.594\cdot 10^{-4}$ & $5.846\cdot 10^{-2}$ & $2.488\cdot 10^{-4}$ & 39 \\
    512 & 16 & 5138 & $6.054\cdot 10^{-4}$ & $4.414\cdot 10^{-2}$ & $8.719\cdot 10^{-5}$ & 79 \\
    1024 & 32 & 11346 & $4.373\cdot 10^{-4}$ & $5.906\cdot 10^{-2}$ & $3.069\cdot 10^{-5}$ & 159 \\
    2048 & 64 & 26834 & $2.348\cdot 10^{-4}$ & $7.984\cdot 10^{-2}$ & $1.083\cdot 10^{-5}$ & 319 \\
    4096 & 128 & 70098 & $6.347\cdot 10^{-4}$ & $1.363\cdot 10^{-1}$ & $3.824\cdot 10^{-6}$ & 639\\ 
    8192 & 256 & 205778 & $3.248\cdot 10^{-3}$ & $2.194\cdot 10^{-1}$ & $1.351\cdot 10^{-6}$ & 1279\\ 
    \bottomrule
\end{tabular}
\caption{Results for increasing problem size
$n=256,\ldots,8192$ and a proportionally increasing number of local
attention blocks. Each row reports the final weighted MSE, the absolute
operator Frobenius error, and the Frobenius error of the best rank-$r$
approximation of $A^{-1}$, where
$r=R_{\mathrm{total}}=Nr_\ell+r_0$ is the rank bound of the Schwarz
attention operator.}
\label{tab:weak}
\end{table}

We perform additional experiments, inspired by parallel weak scaling in supercomputing, i.e., we increase the problem size $n=256, 512, 1024, 2048, 4096,$ and  $8192$ while also proportionally increasing the number of local attention blocks $N=8,16,32,64,128,$ and $256$.
%
%
Thus, the size of each nonoverlapping subdomain remains essentially constant. The number of trainable parameters grows approximately linearly with the number of subdomains, apart from the contribution of the coarse block.

We use local ranks $r_{\ell}=4$ for the local attention blocks, and a coarse rank $r_0=N-1$ for the coarse attention block. 
Again, we use AdamW combined with a \texttt{ReduceLROnPlateau} scheduler with reduction factor $0.5$ and patience 200.
The initial learning rate for AdamW is $\eta=10^{-3}$.
%
%
%
%
%
We use mixed Fourier right-hand sides, a batch size of $256$, and
$2000$ training steps. The results are shown in Table~\ref{tab:weak} and Figure~\ref{fig:figweak_};
the corresponding training histories are shown in Figure~\ref{appfig:weak}.


The final error is not completely scale-independent: the largest case,
$n=8192$ and $N=256$, is visibly more difficult.  Nevertheless, the
optimization still reaches a final weighted MSE of order $10^{-3}$ while
using fixed local rank and hundreds of local attention blocks.

\section{Rank-Based Frobenius-Norm Bounds}
\label{subsec:rank-frobenius-benchmarks}


The exact discrete solution operator is \(A^{-1}\). Since \(A\) is
symmetric positive definite, we can write
\[
  A = V\Lambda V^T,
  \qquad
  A^{-1} = V\Lambda^{-1}V^T,
\]
where the eigenvalues of \(A\) satisfy
\[
  0<\lambda_1\leq \lambda_2\leq \cdots \leq \lambda_{n-1} .
\]
For the one-dimensional finite-difference Poisson matrix with homogeneous
Dirichlet boundary conditions,
\[
  \lambda_k
  =
  \frac{4}{h^2}
  \sin^2\left(\frac{k\pi}{2n}\right),
  \qquad k=1,\ldots,n-1 .
\]
Thus the eigenvalues of \(A^{-1}\) are \(\lambda_k^{-1}\). Since
\(A^{-1}\) is symmetric positive definite, these are also its singular
values, ordered decreasingly.

\begin{theorem}[Global low-rank attention]
\label{thm:global-rank-benchmark}
Let $M_{\theta,\mathrm{global}}^{-1}=QK^T$, with
$Q,K\in\R^{(n-1)\times r_g}$.

Then
$$
  \inf_{Q,K\in\R^{(n-1)\times r_g}}
  \|A^{-1}-QK^T\|_F^2
  =
  \sum_{k=r_g+1}^{n-1} \lambda_k^{-2}.
$$
Equivalently,
$$
  \frac{
    \inf_{Q,K\in\R^{(n-1)\times r_g}}\|A^{-1}-QK^T\|_F
  }{
    \|A^{-1}\|_F
  }
  =
  \left(
  \frac{\sum_{k=r_g+1}^{n-1} \lambda_k^{-2}}
       {\sum_{k=1}^{n-1} \lambda_k^{-2}}
  \right)^{1/2}.
$$
The infimum is attained by the truncated spectral decomposition
$$
  QK^T
  =
  \sum_{k=1}^{r_g} \lambda_k^{-1} v_kv_k^T .
$$
\label{theo:one}
\end{theorem}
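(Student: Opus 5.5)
The plan is to reduce the statement to the Eckart--Young--Mirsky theorem. The first step is to identify the feasible set. The set $\{QK^T : Q,K\in\R^{(n-1)\times r_g}\}$ coincides with the set of all $(n-1)\times(n-1)$ matrices of rank at most $r_g$. One inclusion is immediate, since $\operatorname{rank}(QK^T)\le r_g$. For the other, any $B$ with $\operatorname{rank}(B)\le r_g$ has a compact SVD $B=U\Sigma W^T$ with at most $r_g$ columns. Padding with zero columns if necessary, we may set $Q=U\Sigma$ and $K=W$. Hence the infimum over $Q,K$ equals $\min_{\operatorname{rank}(B)\le r_g}\|A^{-1}-B\|_F^2$, which is the quantity appearing in the last column of Table~\ref{tab:lr-comp-long}. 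The minimum exists because the set of rank-$\le r_g$ matrices is closed.

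Next we record the SVD of $A^{-1}$. Since $A=V\Lambda V^T$ with $V$ orthogonal and all $\lambda_k>0$, we have $A^{-1}=\sum_k \lambda_k^{-1}v_kv_k^T$. This is simultaneously a spectral decomposition and an SVD. The singular values are $\sigma_k=\lambda_k^{-1}$, and the ordering $\lambda_1\le\dots\le\lambda_{n-1}$ makes them nonincreasing. By orthogonal invariance of the Frobenius norm, the truncated sum $\sum_{k\le r_g}\lambda_k^{-1}v_kv_k^T$ has error $\sum_{k>r_g}\lambda_k^{-2}$. This gives the upper bound and shows attainment, once the lower bound is established.

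The main step is the lower bound $\|A^{-1}-B\|_F^2\ge\sum_{k>r_g}\sigma_k^2$ for every $B$ of rank at most $r_g$, i.e.\ Eckart--Young--Mirsky itself. I would cite it as standard. For self-containedness, I would sketch the proof via Weyl's singular value inequality $\sigma_{i+j-1}(X+Y)\le\sigma_i(X)+\sigma_j(Y)$. Apply it with $X=A^{-1}-B$, $Y=B$, and $j=r_g+1$, using $\sigma_{r_g+1}(B)=0$. This yields $\sigma_i(A^{-1}-B)\ge\sigma_{i+r_g}(A^{-1})$ for every $i$. Squaring and summing over $i=1,\dots,n-1-r_g$ gives the bound.

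The relative form follows by dividing by $\|A^{-1}\|_F^2=\sum_{k=1}^{n-1}\lambda_k^{-2}$ and taking square roots. Beyond the Weyl inequality step, the only subtlety I anticipate is the case $r_g\ge n-1$. There the sum is empty and the infimum is $0$, attained by taking the truncation to be $A^{-1}$ itself. I would note this case briefly.
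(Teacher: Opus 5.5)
Your proposal is correct and follows the same route as the paper. Both identify $\{QK^T\}$ with the set of matrices of rank at most $r_g$, note that the singular values of the symmetric positive definite $A^{-1}$ are $\lambda_k^{-1}$, and invoke the Eckart--Young--Mirsky theorem, with your explicit factorization $Q=U\Sigma$, $K=W$, the Weyl-inequality sketch of the lower bound, and the remark on $r_g\ge n-1$ being details the paper leaves implicit.
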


\begin{theorem}[Rank bound for Schwarz attention]
\label{thm:schwarz-rank-benchmark}
Every $S\in\mathcal{S}_{\mathrm{Schwarz}}$ satisfies
$$
  \operatorname{rank}(S)
  \leq
  R_{\mathrm{total}}
  :=
  \min\{r_0,\operatorname{rank}(\Phi)\}
  +
  \sum_{i=1}^{N}\min\{r_i,n_i\}.
$$
Consequently,
$$
  \inf_{S\in\mathcal{S}_{\mathrm{Schwarz}}}
  \|A^{-1}-S\|_F
  \geq
  \left(
  \sum_{k=R_{\mathrm{total}}+1}^{n-1}
  \lambda_k^{-2}
  \right)^{1/2},
$$
with the convention that the sum is zero if
\(R_{\mathrm{total}}\geq (n-1)\). In relative form,
\[
  \frac{
    \inf_{S\in\mathcal{S}_{\mathrm{Schwarz}}}
    \|A^{-1}-S\|_F
  }{
    \|A^{-1}\|_F
  }
  \geq
  \left(
  \frac{
    \sum_{k=R_{\mathrm{total}}+1}^{n-1} \lambda_k^{-2}
  }{
    \sum_{k=1}^{n-1} \lambda_k^{-2}
  }
  \right)^{1/2}.
\]
\label{theo:two}
\end{theorem}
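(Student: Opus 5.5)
The plan is to bound the rank of each summand separately, add them up using subadditivity of rank, and then reduce the error bound to Theorem~\ref{thm:global-rank-benchmark} (Eckart--Young). Here $\mathcal{S}_{\mathrm{Schwarz}}$ is the set of all operators of the form \eqref{eq:dd-attention} with $Q_0,K_0\in\R^{n_0\times r_0}$ and $Q_i,K_i\in\R^{n_i\times r_i}$; in the experiments $r_i=r_\ell$.

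\textbf{Rank of each summand.} For the coarse term,
\[
\operatorname{rank}(\Phi Q_0K_0^T\Phi^T)\le \min\{\operatorname{rank}(\Phi),\operatorname{rank}(Q_0K_0^T)\}\le \min\{\operatorname{rank}(\Phi),r_0\},
\]
since $\operatorname{rank}(XY)\le\min\{\operatorname{rank}X,\operatorname{rank}Y\}$ and $Q_0$ has only $r_0$ columns. For each local term, $Q_iK_i^T$ is an $n_i\times n_i$ matrix whose factors have $r_i$ columns, so its rank is at most $\min\{r_i,n_i\}$. Multiplying by $D_i^{1/2}$ and $R_i$ on either side cannot increase the rank. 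Subadditivity, $\operatorname{rank}(X+Y)\le\operatorname{rank}X+\operatorname{rank}Y$, then gives $\operatorname{rank}(S)\le R_{\mathrm{total}}$ for every $S\in\mathcal{S}_{\mathrm{Schwarz}}$.

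\textbf{Frobenius lower bound.} Set $r:=\min\{R_{\mathrm{total}},n-1\}$. Every $S$ has rank at most $r$, so it can be written as $S=QK^T$ with $Q,K\in\R^{(n-1)\times r}$. A rank factorization does this, padding with zero columns if the rank is smaller than $r$. Hence $\mathcal{S}_{\mathrm{Schwarz}}$ is contained in the set of global low-rank operators with $r_g=r$. The infimum over the larger set is no larger, so Theorem~\ref{thm:global-rank-benchmark} yields
\[
\inf_{S\in\mathcal{S}_{\mathrm{Schwarz}}}\|A^{-1}-S\|_F^2\ \ge\ \sum_{k=r+1}^{n-1}\lambda_k^{-2}.
\]
If $R_{\mathrm{total}}\ge n-1$, the sum is empty and equals zero, which matches the stated convention. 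The relative form follows by dividing by $\|A^{-1}\|_F=(\sum_{k=1}^{n-1}\lambda_k^{-2})^{1/2}$, because the singular values of the symmetric positive definite matrix $A^{-1}$ are the $\lambda_k^{-1}$.

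\textbf{Main obstacle.} There is no real obstacle. The only care needed is the reduction to Theorem~\ref{thm:global-rank-benchmark}: one must check that a matrix of rank at most $r$ is indeed representable as $QK^T$ with $r$ columns, including the zero-padding case and the truncation $r\le n-1$. Everything else is elementary rank calculus.
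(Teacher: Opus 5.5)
Your proof is correct and follows the paper's argument: bound the coarse term's rank by $\min\{r_0,\operatorname{rank}(\Phi)\}$ and each local term's rank by $\min\{r_i,n_i\}$, apply subadditivity, then use Eckart--Young. The paper applies Eckart--Young directly to rank-$R_{\mathrm{total}}$ matrices, whereas you route it through Theorem~\ref{thm:global-rank-benchmark} with $r=\min\{R_{\mathrm{total}},n-1\}$; this is only a repackaging, though it handles the empty-sum convention slightly more explicitly.
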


The second theorem is only a rank-based lower benchmark. It does not use
the structure of the Schwarz-attention
architecture. 
The proofs are given in Section~\ref{app:proofs} of the Appendix.

\section{Discussion}
\label{sec:discussion}

The experiments support three observations.
First, the proposed two-level domain-decomposition attention operator provides an effective hierarchical representation of the Poisson inverse.
The local blocks are designed to capture strong short-range interactions, while the coarse block represents weaker long-range interactions.
This mirrors the role of fine and coarse spaces in classical two-level Schwarz methods for elliptic partial differential equations.
Second, the proposed model is substantially more parameter-efficient than a global low-rank attention baseline.
In the representative configuration, the global baseline uses $19890$ trainable parameters, while the domain-decomposition model uses $2418$.

Despite using far fewer parameters,
the domain-decomposition model reaches low errors in fewer
optimization steps
%
%
and achieves slightly more accurate approximations in the reported experiments.
Third, the approach remained effective for up to $N=256$ local attention blocks, although a slight deterioration in the final wMSE was observed.
Overall, these results suggest that hierarchical attention derived from domain decomposition provides an effective inductive bias for operator learning.
Extending this framework 
to other contexts is future work.

There are several limitations.
The experiments are restricted to a one-dimensional Poisson problem and synthetic Fourier right-hand sides.
The model is linear and softmax-free.
This is intentional for the present study: it isolates the effect of domain-decomposition structure.
%
%
%
%

Extending the approach to standard softmax attention~\citep{Vaswani2017} is non-trivial, because the row-wise softmax operation generally requires access to the full attention scores.
A naive implementation would therefore require assembling the Schwarz attention scores into a dense matrix, or otherwise performing a global normalization across the local and coarse contributions, which could substantially increase computational and memory costs.
Nevertheless, core advantages of the hierarchical parameterization would likely persist: the model would still require significantly fewer trainable parameters than a global low-rank baseline, and the structured factorization may continue to facilitate faster and more stable optimization.

However, applying a single softmax normalization to a mixture of fine and coarse
tokens may be undesirable.
A possible direction for an adapted softmax variant would therefore be to apply softmax separately on the fine-level subdomain blocks and on the coarse level, treating the interface-hat functions as pooling or summary tokens.
Related ideas of combining attention mechanisms for long documents at different
granularities appear in Longformer \citep{longformer2020}.
Related ideas of processing representations at multiple spatial
granularities appear in the Swin Transformer
\citep{swintransformer2021}.

%
%
Extensions to softmax attention, as well as to other PDEs, higher-dimensional problems, nonlinear attention architectures and parallel aspects of our approach remain future work.
We note that the overlapping Schwarz algorithm is highly parallelizable~\citep{jolivet2012scalable,heinlein2022parallel}
and may therefore provide a route toward parallel attention mechanisms across multiple GPUs.
Such methods have received considerable attention in recent years for long-context Transformer models~\citep{rectified2025,star2025,moba2025,quest2024}.


This work does not involve human subjects or use personal data, nor does it target a safety-critical or socially sensitive application. We are not aware of specific societal harms arising directly from the proposed method in the setting studied here.


\section{Conclusion}

We introduced a hierarchical attention mechanism inspired by two-level overlapping Schwarz domain decomposition.
The method replaces a dense global low-rank attention operator by a sum of local overlapping attention blocks and a coarse attention block.
For the one-dimensional Poisson inverse, this construction gives a compact and interpretable operator-learning model.
The numerical results suggest that domain-decomposition structure can improve both parameter efficiency and training behavior in softmax-free operator learning,
relative to a global attention baseline.

\section{Interactive demonstration}
An interactive demonstration accompanying this paper is available at
\url{https://huggingface.co/spaces/rhbch/schwarzattention}.
It allows interactive exploration of the proposed Overlapping Schwarz
Attention and comparison with a parameter-matched global low-rank
attention baseline.
\newpage
\bibliographystyle{unsrtnat}
\bibliography{./airefs}


\newpage
\appendix
\counterwithin{figure}{section}
\counterwithin{table}{section}

\section{Supplementary Material}
\subsection{Construction of the Right-Hand Sides}
\label{app:rhs}

The code in Figure~\ref{fig:appendixrhs} shows how the right-hand sides are constructed.

\begin{figure}[h!]
\begin{lstlisting}[language=Python]
def normalize_rows(v: torch.Tensor, eps: float = 1.0e-12) -> torch.Tensor:
    return v / (v.norm(dim=-1, keepdim=True) + eps)

def fourier_basis(x: torch.Tensor, modes: int) -> torch.Tensor:
    """Rows are normalized sine/cosine Fourier modes without decay."""
    rows = []
    for m in range(1, modes + 1):
        rows.append(torch.sin(math.pi * m * x))
        rows.append(torch.cos(math.pi * m * x))
    return normalize_rows(torch.stack(rows, dim=0))

def sample_mixed_fourier_rhs(batch_size: int, x: torch.Tensor) -> torch.Tensor:
    """Mixed Fourier RHS: half pure modes, half random decaying combinations."""
    n_dof = x.numel()

    def pure_modes(bs: int) -> torch.Tensor:
        B = fourier_basis(x, MODES)
        idx = torch.randint(0, B.shape[0], (bs,), device=DEVICE)
        signs = 2.0 * torch.randint(0, 2, (bs, 1), device=DEVICE, dtype=DTYPE) - 1.0
        return signs * B[idx, :]

    def random_combinations(bs: int) -> torch.Tensor:
        f = torch.zeros(bs, n_dof, device=DEVICE, dtype=DTYPE)
        for m in range(1, MODES + 1):
            scale = 1.0 / (m ** DECAY)
            a = torch.randn(bs, 1, device=DEVICE, dtype=DTYPE) * scale
            b = torch.randn(bs, 1, device=DEVICE, dtype=DTYPE) * scale
            f = f + a * torch.sin(math.pi * m * x)[None, :]
            f = f + b * torch.cos(math.pi * m * x)[None, :]
        return f

    b1 = batch_size // 2
    b2 = batch_size - b1
    f = torch.cat([pure_modes(b1), random_combinations(b2)], dim=0)
    f = f[torch.randperm(batch_size, device=DEVICE)]
    return normalize_rows(f)
\end{lstlisting}
\caption{
The code was extracted from our more general implementation and subsequently verified to produce identical results.
}
\label{fig:appendixrhs}
\end{figure}

\subsection{Full Learning Rate Sweep for the Overlapping Schwarz Attention}
\label{app:lrsweep}


\begin{figure}[h!]
  \centering
  \begin{subfigure}{0.40\textwidth}
    \centering
    \includegraphics[width=\linewidth]{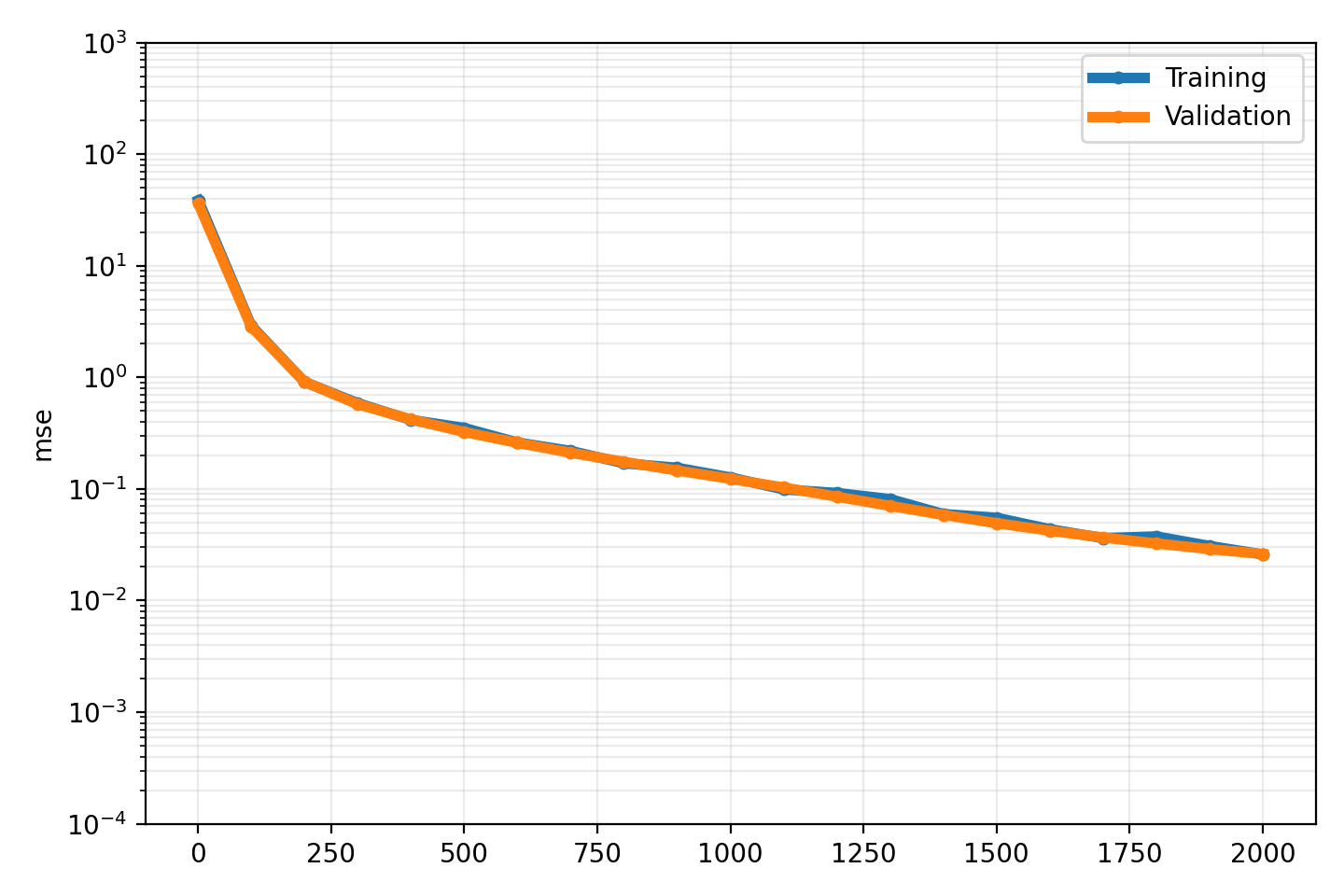}
    \caption{\(\eta=10^{-4}\)}
  \end{subfigure}
  \hfill
  \begin{subfigure}{0.40\textwidth}
    \centering
    \includegraphics[width=\linewidth]{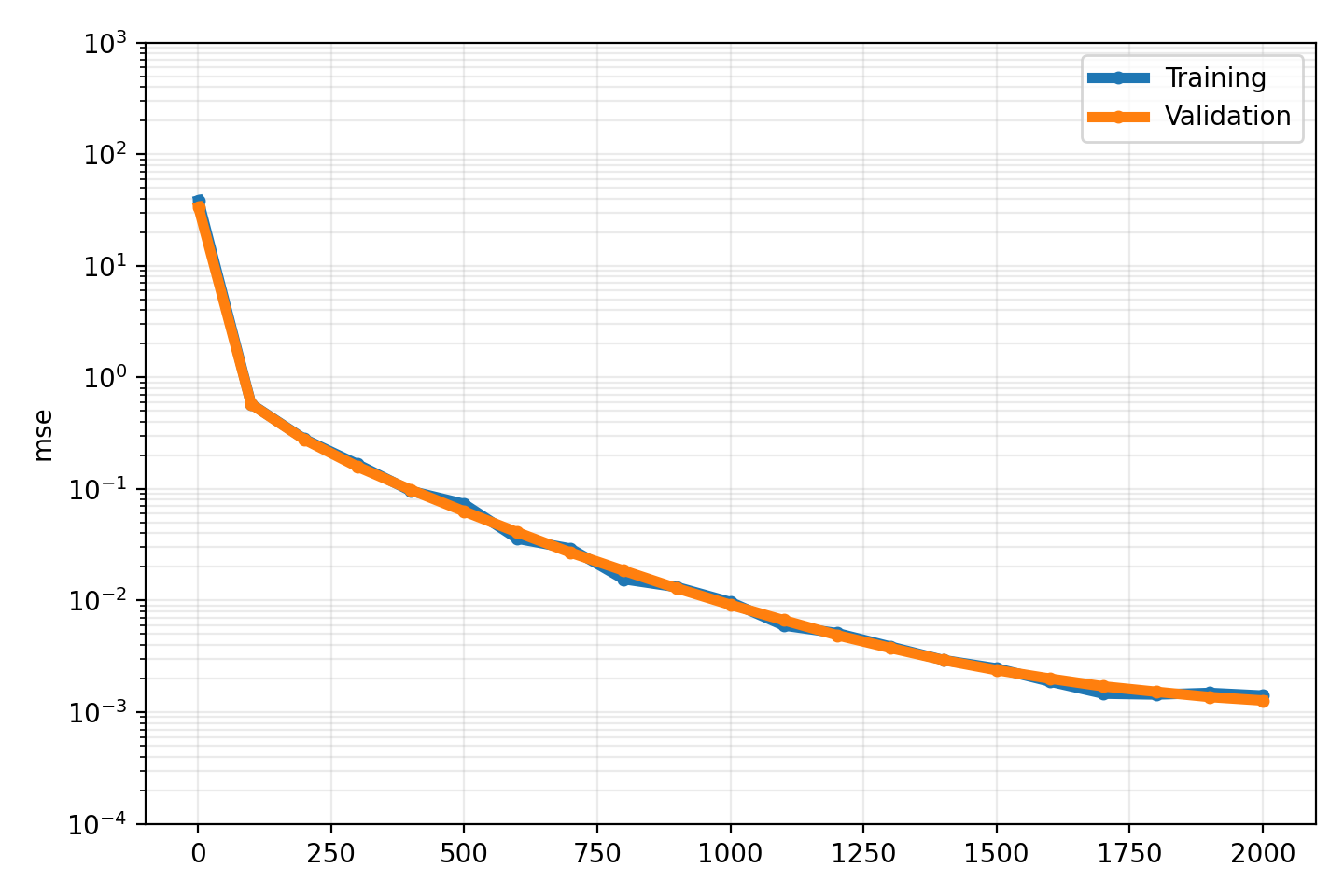}
    \caption{\(\eta=3\cdot10^{-4}\)}
  \end{subfigure}
  \vspace{0.5em}
  \begin{subfigure}{0.40\textwidth}
    \centering
    \includegraphics[width=\linewidth]{figures/schwarz_train_val_mse_by_lr_korrigiert/schwarz_train_val_mse__lr_0p001.png}
    \caption{\(\eta=10^{-3}\)}
  \end{subfigure}
  \hfill
  \begin{subfigure}{0.40\textwidth}
    \centering
    \includegraphics[width=\linewidth]{figures/schwarz_train_val_mse_by_lr_korrigiert/schwarz_train_val_mse__lr_0p003.png}
    \caption{\(\eta=3\cdot10^{-3}\)}
  \end{subfigure}
  \vspace{0.5em}
  \begin{subfigure}{0.40\textwidth}
    \centering
    \includegraphics[width=\linewidth]{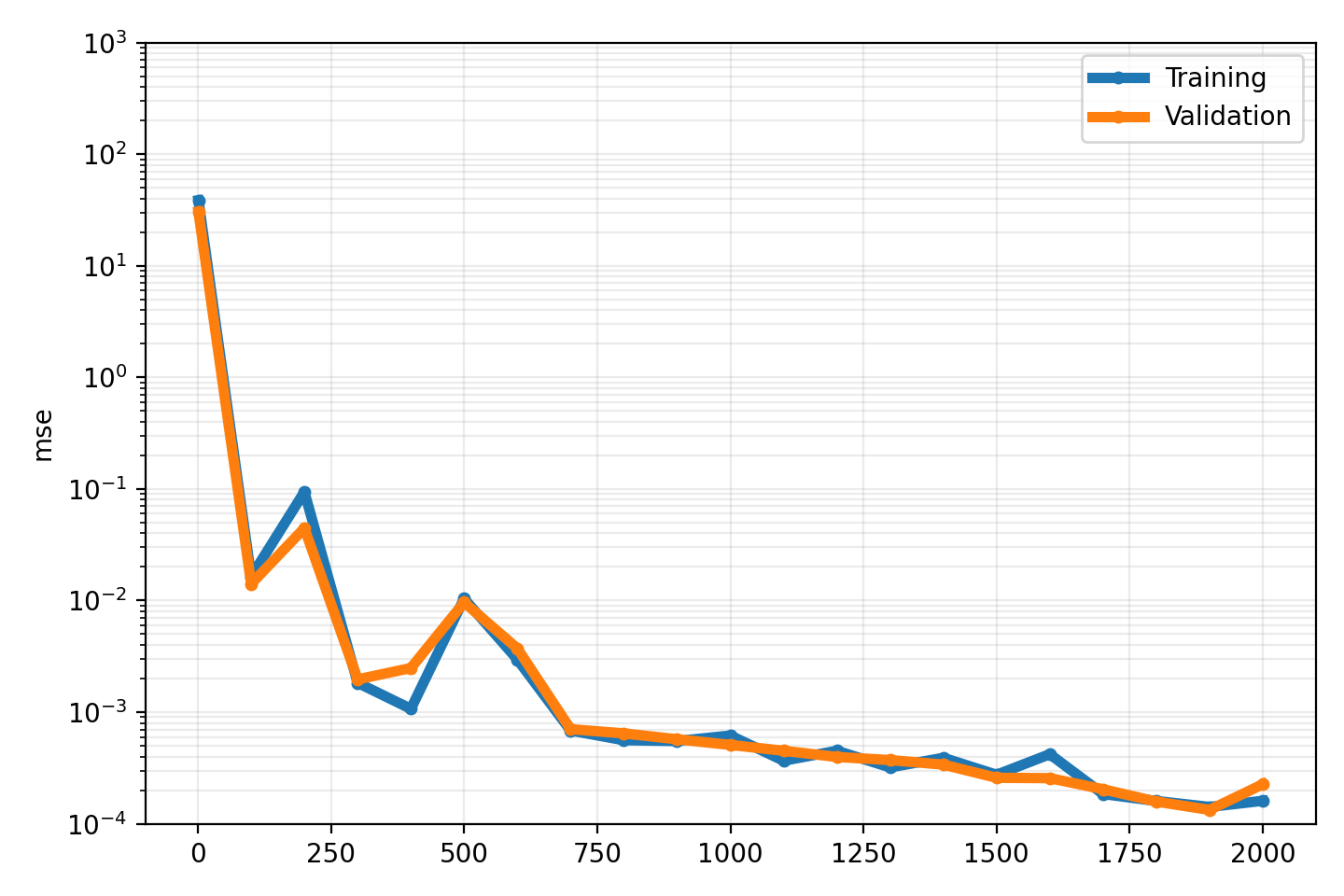}
    \caption{\(\eta=10^{-2}\)}
  \end{subfigure}
  \hfill
  \begin{subfigure}{0.40\textwidth}
    \centering
    \includegraphics[width=\linewidth]{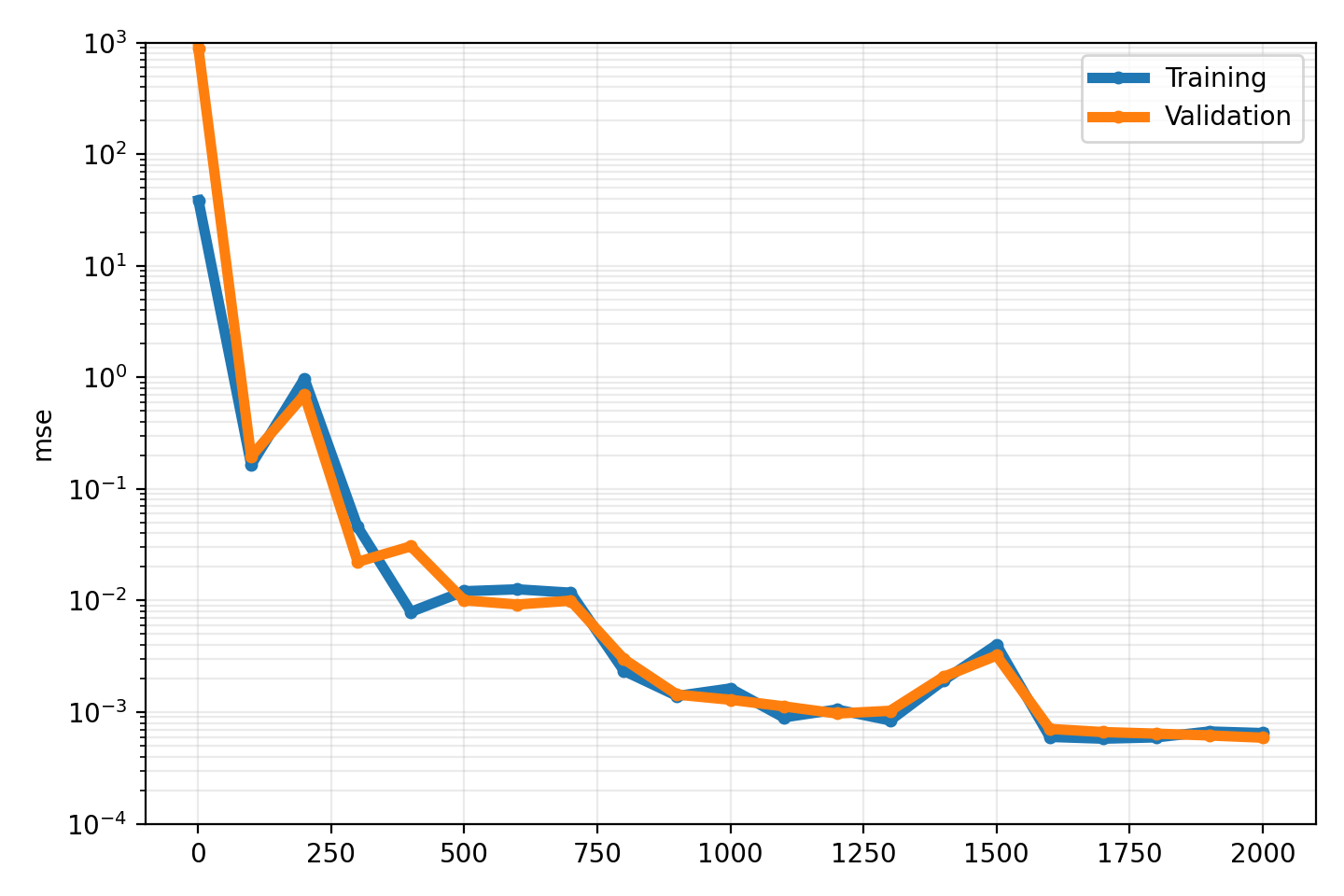}
    \caption{\(\eta=3\cdot10^{-2}\)}
  \end{subfigure}
%
%
%
  \caption{Learning-rate sweep for $n=256$ and $N=8$ for the two-level overlapping Schwarz attention using a rank $r_\ell=4$ for the local attention blocks and rank $r_0=7$ for the coarse attention. 
  Each panel shows the weighted training MSE and the weighted validation MSE.
  We use AdamW and a plateau scheduler.
  }
  \label{fig:lr-sweep-sk}
\end{figure}

Figure~\ref{fig:lr-sweep-sk} shows the training curves for the {six} learning rates
%
%
$$
\eta \in
\left\{
10^{-4},\,\,
3\cdot 10^{-4},\,\,
10^{-3},\,\,
3\cdot 10^{-3},\,\,
10^{-2},\,\,
3\cdot 10^{-2}
\right\}.
$$
For the initialization of the attention factors, we use the method described in~Section~\ref{sec:init-random}.

Figure~\ref{fig:lr-sweep-sk} shows that the training behavior is not highly sensitive to the precise choice of the learning rate over a fairly broad range. The smallest learning rates lead to stable but slower convergence, while the larger learning rates reach low errors more rapidly but show more pronounced transient fluctuations. In particular, $\eta=3\cdot 10^{-3}$ attains one of the lowest final losses in this sweep, but also exhibits a visible intermediate spike. We therefore use $\eta=10^{-3}$ as a conservative default in our experiments: it reaches essentially the same error regime, avoids the strongest transients, and provides a robust compromise between convergence speed and stability.

The validation wMSE is computed on 256 independently sampled right-hand sides drawn from the same distribution as the training data. 
The training and validation curves are almost indistinguishable throughout the sweep. Since the validation right-hand sides are independently sampled from the same distribution as the training data, this should be interpreted as in-distribution validation. The agreement indicates that the observed optimization behavior is not specific to the sampled training batch, but is representative for independently sampled right-hand sides from the same problem class.

\subsection{Full Learning-Rate Sweep: Schwarz Attention vs.\ Global Attention}

\label{app:lrsweepcomp}

 \begin{figure}[h!]
  \centering
  \begin{subfigure}{0.40\textwidth}
    \centering
    \includegraphics[width=\linewidth]{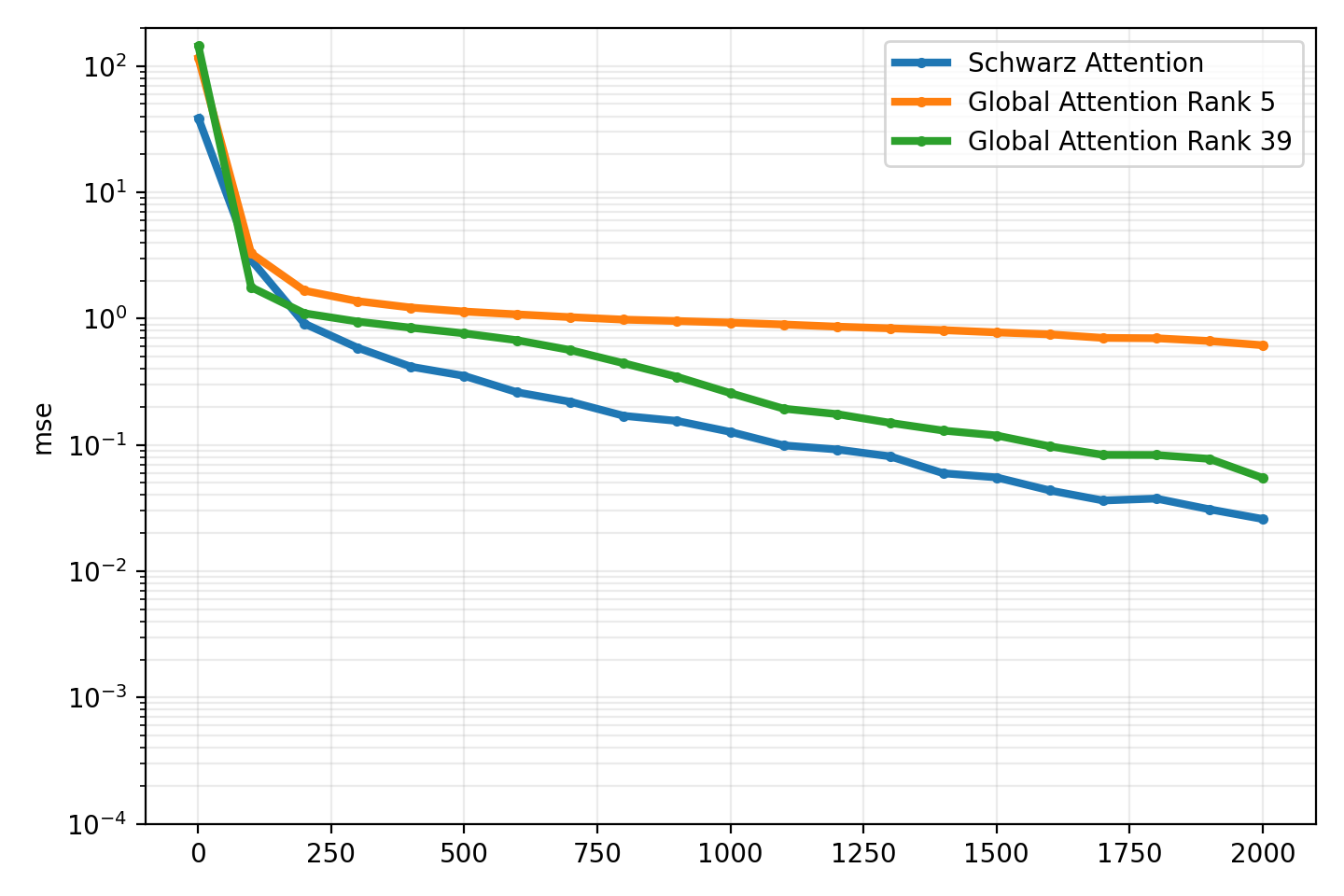}
    \caption{\(\eta=10^{-4}\)}
  \end{subfigure}
  \hfill
  \begin{subfigure}{0.40\textwidth}
    \centering
    \includegraphics[width=\linewidth]{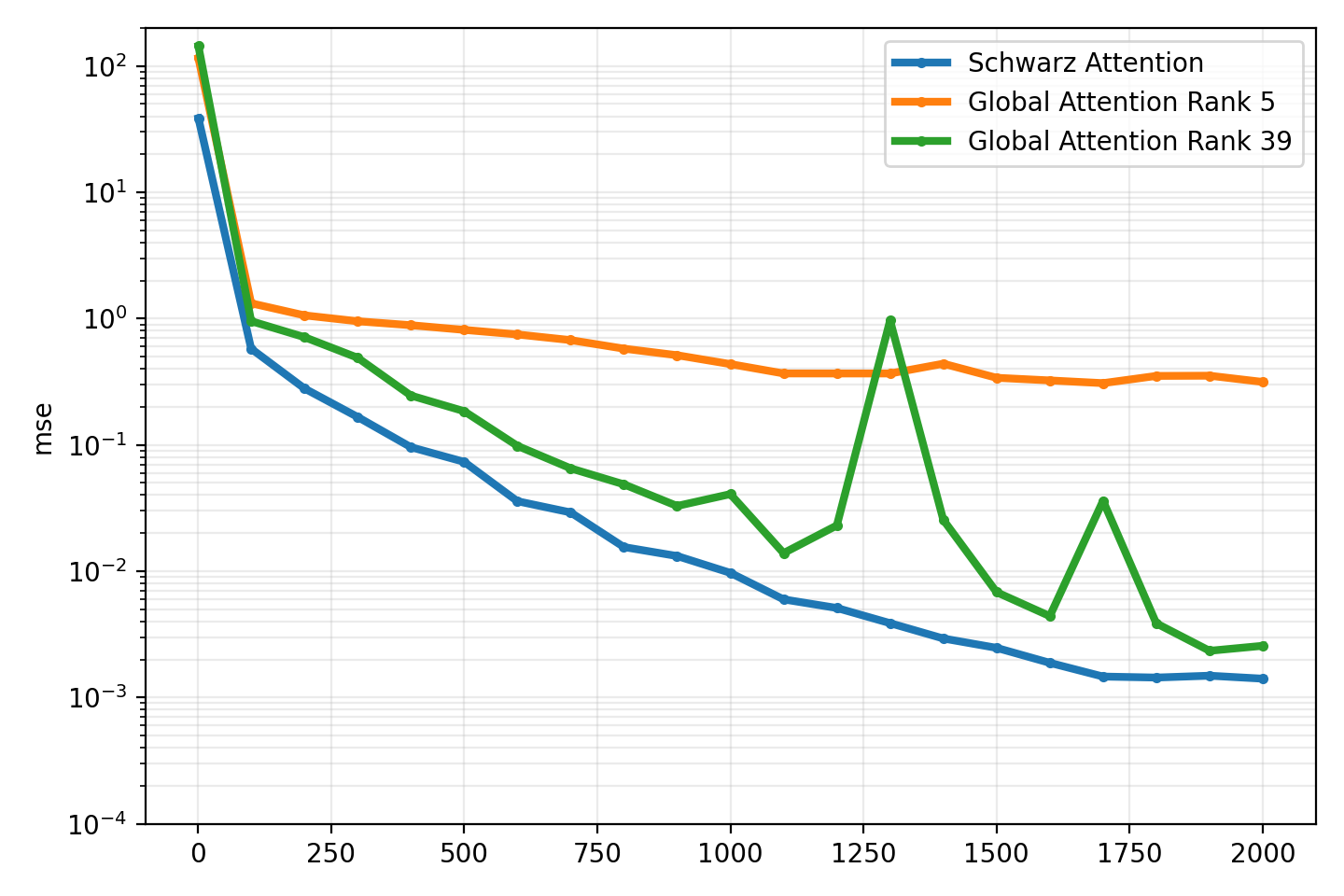}
    \caption{\(\eta=3\cdot10^{-4}\)}
  \end{subfigure}

  \vspace{0.5em}

  \begin{subfigure}{0.40\textwidth}
    \centering
    \includegraphics[width=\linewidth]{figures/mse_plots_by_lr_schwarz_r5_r39_korrigiert/mse__lr_0p001__cases_schwarz_global_r5_global_r39.png}
    \caption{\(\eta=10^{-3}\)}
  \end{subfigure}
  \hfill
  \begin{subfigure}{0.40\textwidth}
    \centering
    \includegraphics[width=\linewidth]{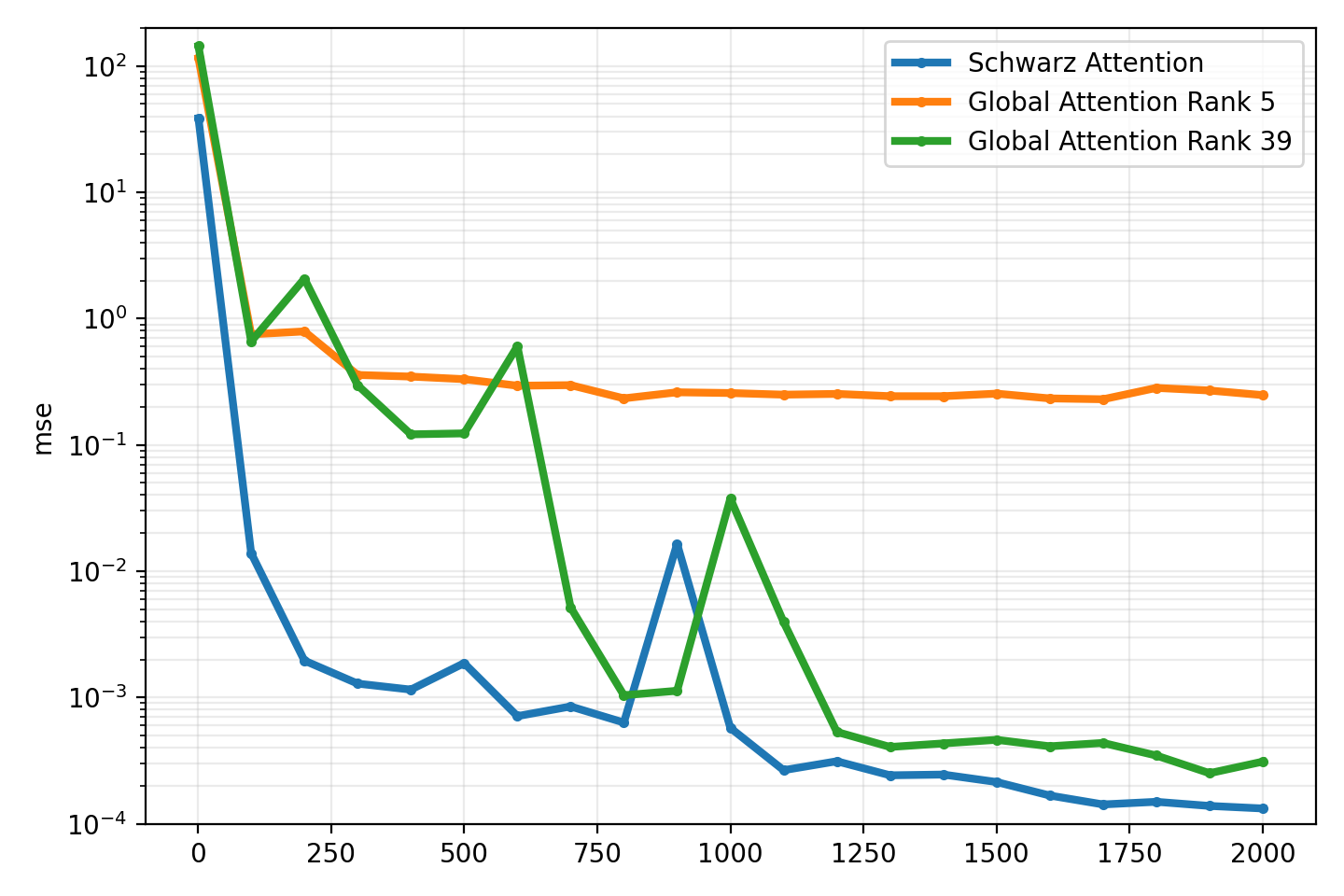}
    \caption{\(\eta=3\cdot10^{-3}\)}
  \end{subfigure}

  \vspace{0.5em}

  \begin{subfigure}{0.40\textwidth}
    \centering
    \includegraphics[width=\linewidth]{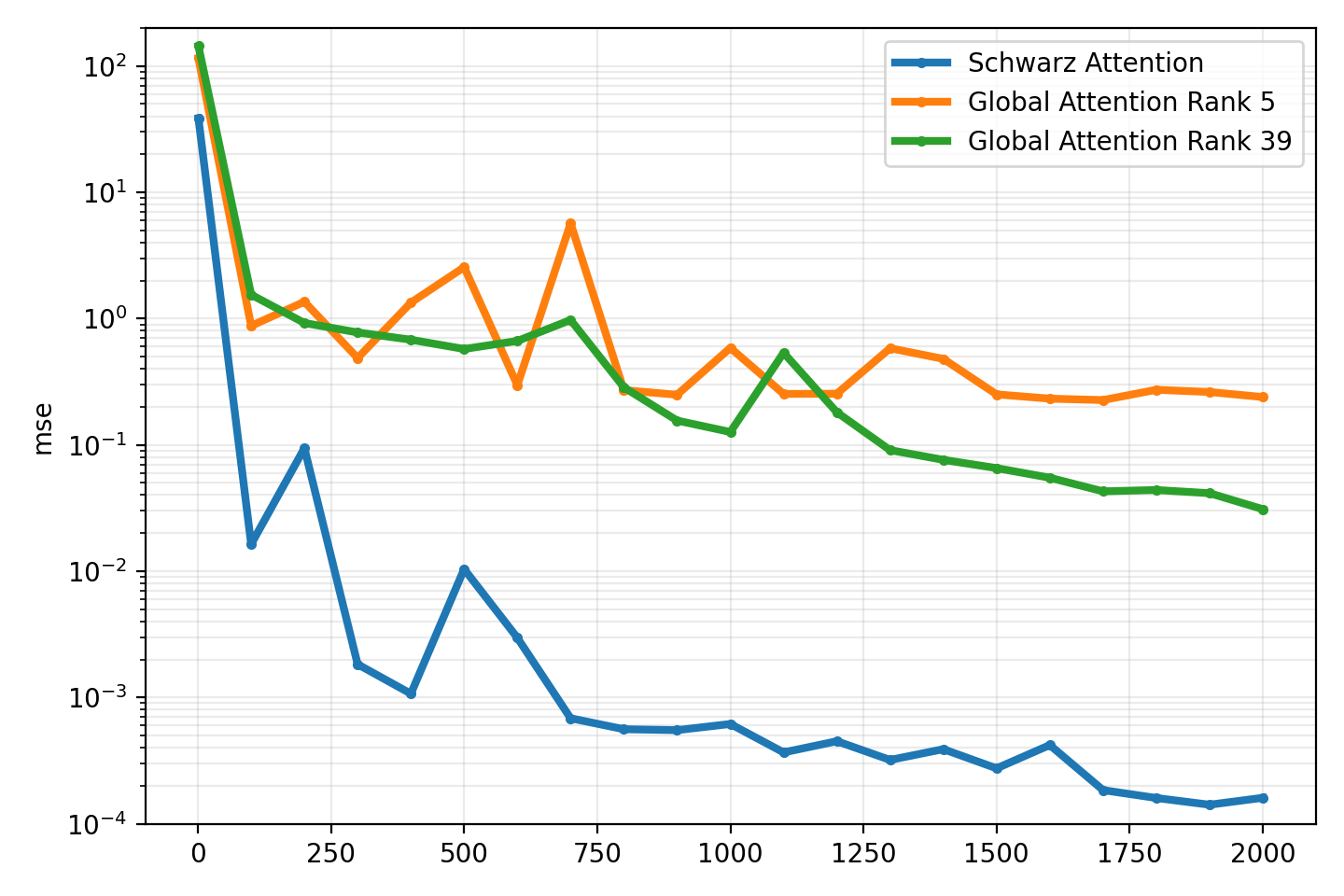}
    \caption{\(\eta=10^{-2}\)}
  \end{subfigure}
  \hfill
  \begin{subfigure}{0.40\textwidth}
    \centering
    \includegraphics[width=\linewidth]{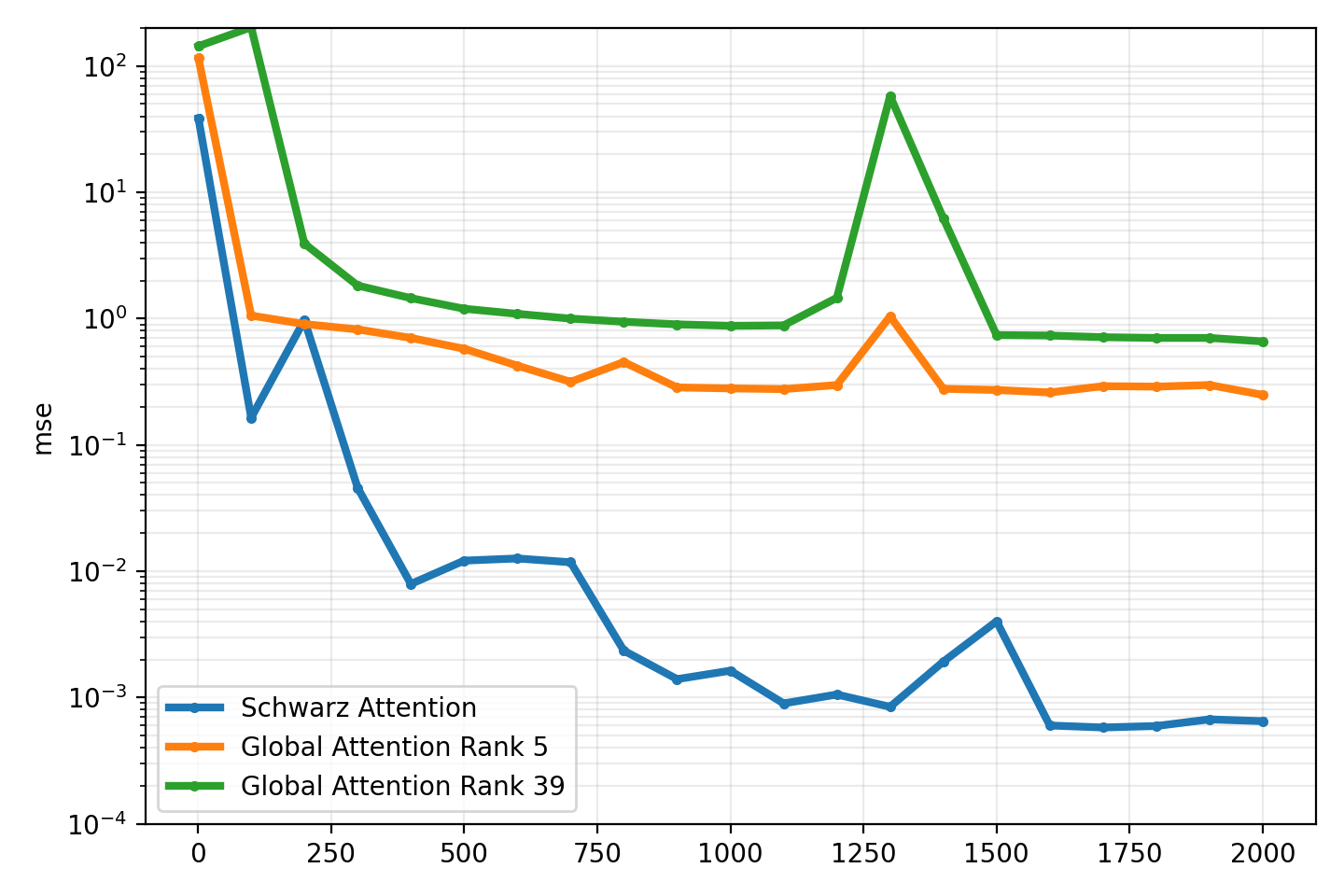}
    \caption{\(\eta=3\cdot10^{-2}\)}
  \end{subfigure}

%

  \caption{Learning-rate sweep for global low-rank attention and hierarchical domain-decomposition attention for $n=256$ and $N=8$.
  Each panel shows the weighted training wMSE 
  on the current training batch.
  The same training right-hand-side sequence is used for both models.
  The Schwarz attention uses local rank $4$ and coarse rank $7$, thus the rank is bounded by $8\times 4+7=39$. The global attention uses rank $39$. The global attention with rank $5$ is roughly comparable to the Schwarz attention with respect to the number of parameters.
  }
  \label{fig:lr-sweep-training-curves}
  \label{fig:lr-sweep-model-comparison}
  \label{fig:lr-comp}
\end{figure}

Across the learning-rate sweep
$\eta\in\{10^{-4},3\cdot 10^{-4},10^{-3},3\cdot 10^{-3},10^{-2},
3\cdot 10^{-2}\}$, Schwarz attention consistently outperforms the parameter-matched rank-5 baseline and reaches substantially lower weighted MSE values; see Figure~\ref{fig:lr-comp}. This shows that the improvement is not merely a consequence of using more parameters.

The comparison with the rank-39 global model is more stringent, since this baseline has many more trainable parameters. Nevertheless, Schwarz attention reaches lower final errors for the learning rates considered and exhibits a more stable optimization behavior. The global rank-39 model can reduce the loss substantially compared with rank 5, but its curves show stronger sensitivity to the learning rate and more pronounced transient spikes. The advantage of Schwarz attention is most visible for moderate learning rates, in particular around $\eta=10^{-3}$ and $3\cdot 10^{-3}$, where it combines rapid convergence with low final error.

The parameter-matched global rank-5 baseline behaves qualitatively differently from both Schwarz attention and the global rank-39 model. It decreases the loss initially, but then stagnates at a substantially higher error level and does not reach the low-MSE regime attained by the two higher-rank models. Thus, the rank-5 comparison shows the limitation of a purely global low-rank approximation under a comparable parameter budget. In contrast, Schwarz attention reaches the accuracy regime of the much larger rank-39 global model, and in the relevant learning-rate range even improves upon it.

\subsection{Additional Fixed Visualization Examples}
\label{app:visual}
Figures~\ref{fig:visual-schwarz}, \ref{fig:visual-global-r39}, and \ref{fig:visual-global-r5}
show the learned approximations for $12$ sample right-hand sides for the overlapping Schwarz attention, global attention with rank 39, and global attention with rank 5.
These right-hand sides are sampled from the same prescribed distribution as the
training right-hand sides, but using a different random seed, and are kept fixed
in order to make the visual comparison reproducible.


The purpose of these plots is not to assess out-of-distribution
generalization. Instead, they provide a diagnostic view of the learned operator
on representative samples from the training distribution, analogous to the
weighted MSE values shown in the training curves in Figure~\ref{fig:lr-sweep-training-curves}.

The sample solutions in
Figure~\ref{fig:visual-schwarz} -- \ref{fig:visual-global-r5}
illustrate the quantitative findings. Schwarz attention and global attention with rank $39$ both give accurate reconstructions on the displayed validation examples; the predicted and exact solution curves are often nearly
indistinguishable. By contrast, the parameter-matched global rank-5 model shows visible errors, particularly for more oscillatory right-hand sides. This confirms that the rank-5 global model lacks the expressive power needed to reach the low-error regime, whereas the Schwarz construction achieves accuracy comparable to the much larger rank-39 global model with far fewer parameters.

\begin{figure}[htbp]
  \centering
  \includegraphics[width=.99\linewidth]{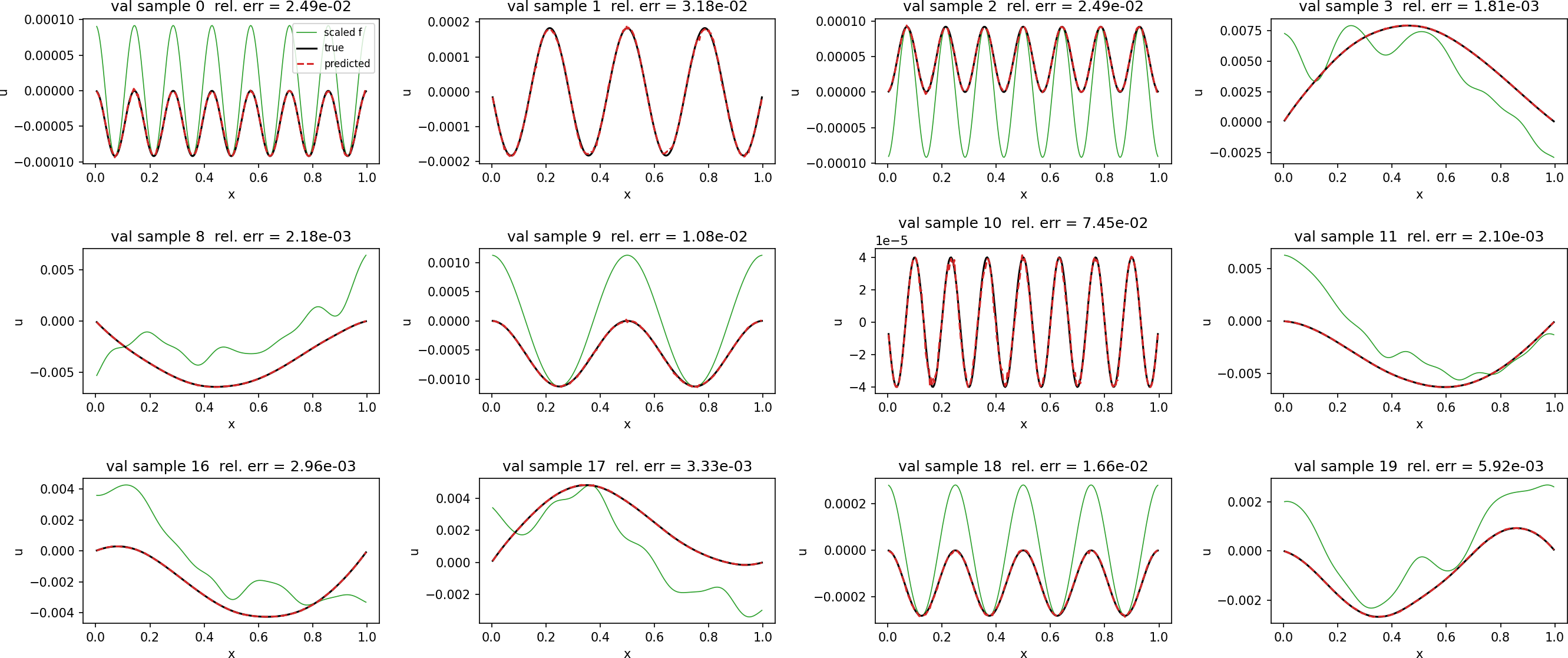}
  \caption{
    Sample solutions for overlapping Schwarz attention after 2000
    iterations with learning rate $\eta=0.001$.
    The exact solution is plotted in black and the model prediction as
    a dashed red line. The right-hand side is shown in green and was
    rescaled to fit the plotting range.
  }
  \label{fig:visual-schwarz}
\end{figure}

\begin{figure}[htbp]
  \centering
  \includegraphics[width=.99\linewidth]{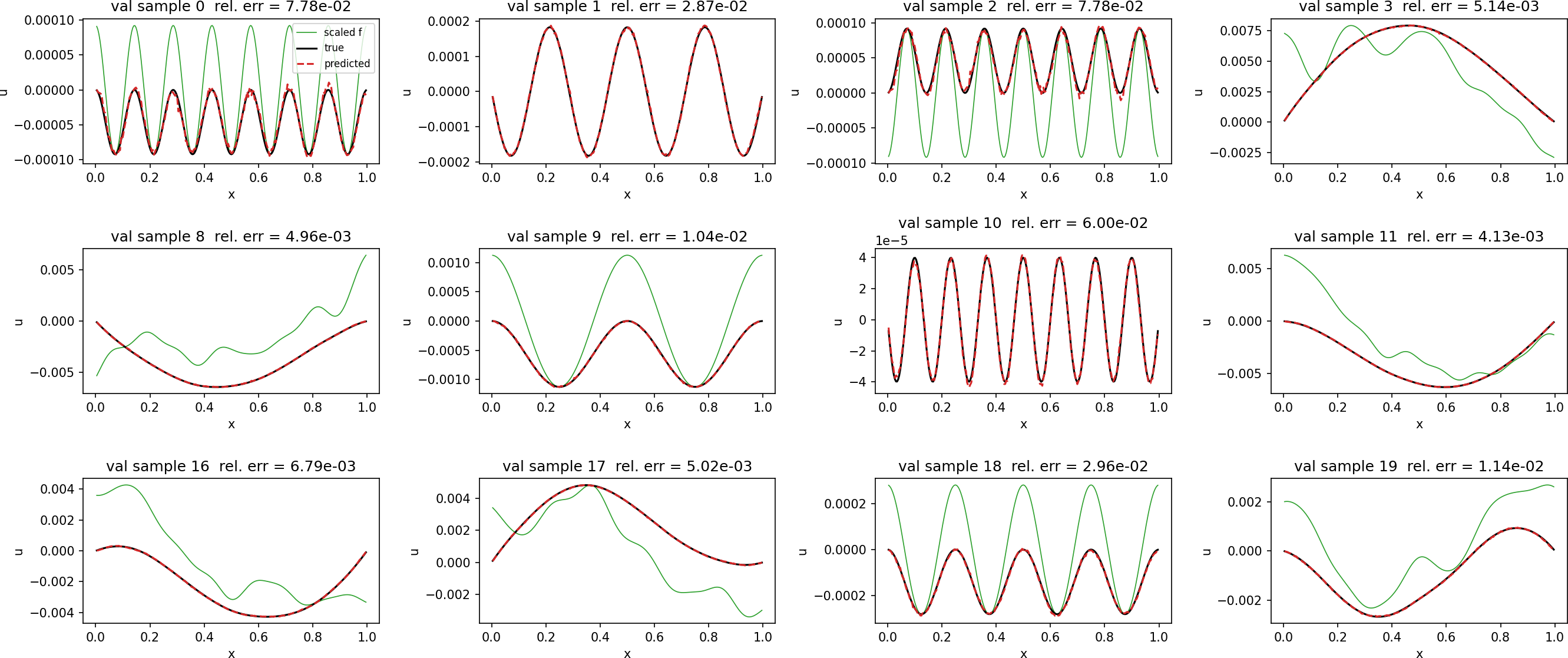}
  \caption{
    Sample solutions for global attention with rank $r=39$ after 2000
    iterations with learning rate $\eta=0.001$.
    The exact solution is plotted in black and the model prediction as
    a dashed red line. The right-hand side is shown in green and was
    rescaled to fit the plotting range.
  }
  \label{fig:visual-global-r39}
\end{figure}

\begin{figure}[htbp]
  \centering
  \includegraphics[width=.99\linewidth]{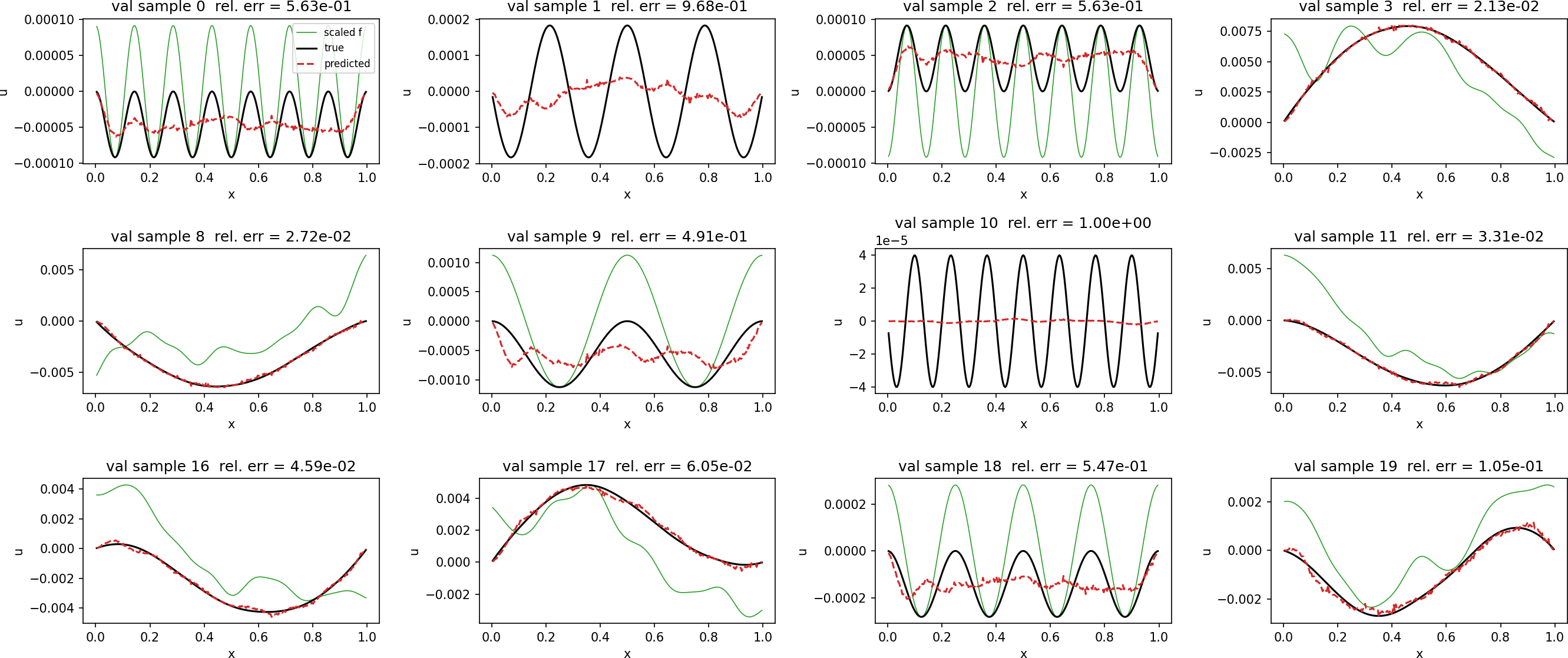}
  \caption{
    Sample solutions for global attention with rank $r=5$ after 2000
    iterations with learning rate $\eta=0.001$.
    The exact solution is plotted in black and the model prediction as
    a dashed red line. The right-hand side is shown in green and was
    rescaled to fit the plotting range.
  }
  \label{fig:visual-global-r5}
\end{figure}

\FloatBarrier
\subsection{Training Curves for Increasing Numbers of Local Attention Blocks}
\label{app:weak}
\begin{figure}[h!]
  \begin{center}
    \begin{subfigure}{0.48\linewidth}
      \centering
      \includegraphics[width=\linewidth]{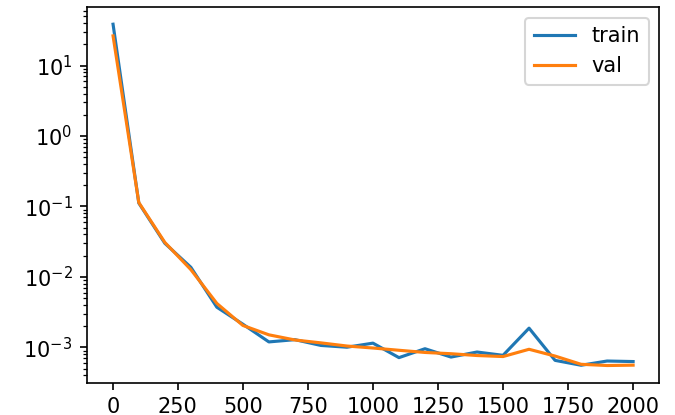}
      \caption{$n=256$, $N=8$}
    \end{subfigure}\hfill
    \begin{subfigure}{0.48\linewidth}
      \centering
      \includegraphics[width=\linewidth]{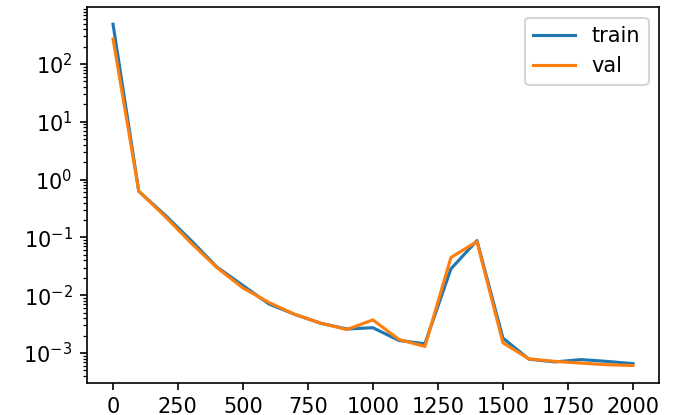}
      \caption{$n=512$, $N=16$}
    \end{subfigure}\\
    \begin{subfigure}{0.48\linewidth}
      \centering
      \includegraphics[width=\linewidth]{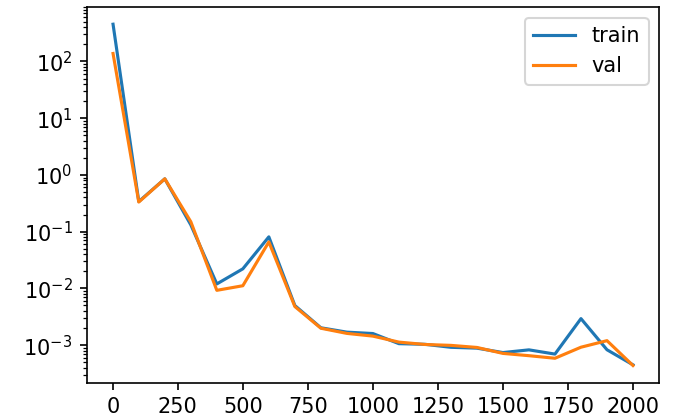}
      \caption{$n=1024$, $N=32$}
    \end{subfigure}\hfill
    \begin{subfigure}{0.48\linewidth}
      \centering
      \includegraphics[width=\linewidth]{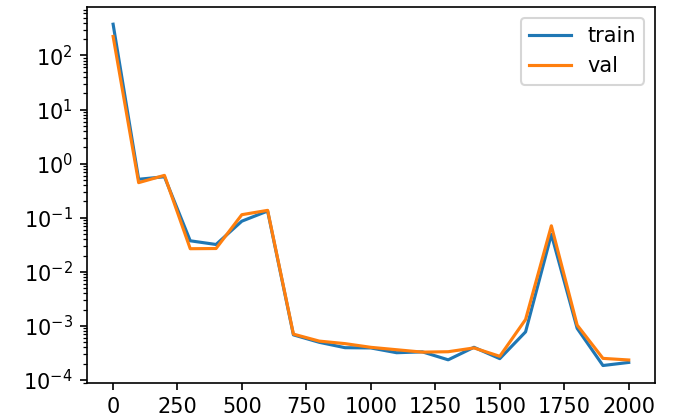}
      \caption{$n=2048$, $N=64$}
    \end{subfigure}\\
    \begin{subfigure}{0.48\linewidth}
      \centering
      \includegraphics[width=\linewidth]{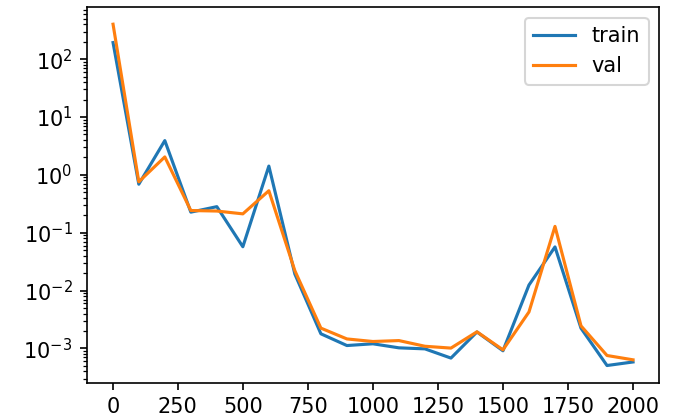}
      \caption{$n=4096$, $N=128$}
    \end{subfigure}\hfill
    \begin{subfigure}{0.48\linewidth}
      \centering
      \includegraphics[width=\linewidth]{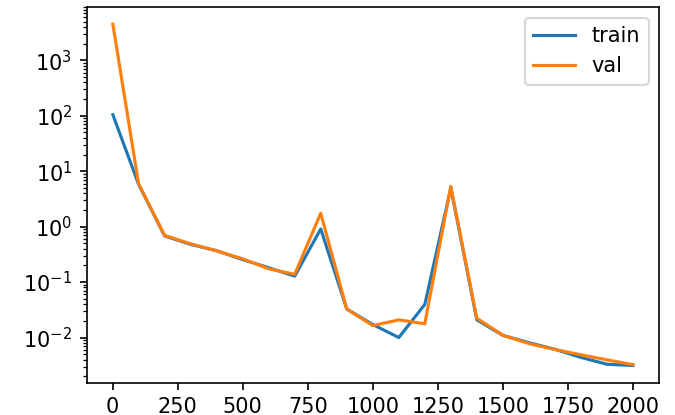}
      \caption{$n=8192$, $N=256$}
    \end{subfigure}\\
    \caption{Training and validation loss (wMSE) over 2000 training steps for an increasing problem size and number of local attention blocks.
}
\label{appfig:weak}
\end{center}
\end{figure}

In Figure~\ref{appfig:weak}, the final weighted MSE remains small over the full range $n=256,\ldots,8192$. Up to $n=4096$, the final errors stay in the range of about $10^{-4}$ to $10^{-3}$, and even for $n=8192$ the method still reaches a weighted MSE of order $10^{-3}$. This indicates that the two-level Schwarz attention construction remains trainable as the number of local blocks is increased substantially, without increasing the local rank.

\subsection{Proofs of Theorems~1 and~2}
\label{app:proofs}
\begin{proof}[Proof of Theorem~\ref{theo:one}]
The product \(QK^T\) can represent any matrix of rank at most \(r_g\).
Since \(A^{-1}\) is symmetric positive definite, its singular values are
\(\lambda_k^{-1}\). The result follows from the
Eckart--Young--Mirsky theorem.
\end{proof}

For the Schwarz-attention operator \(M_{\theta,\mathrm{Schwarz}}^{-1}\)
defined in \eqref{eq:dd-attention}, we use the already introduced local
and coarse blocks
\[
  G_0 = Q_0K_0^T,
  \qquad
  G_i = Q_iK_i^T,
  \qquad i=1,\ldots,N .
\]
Assume that
\[
  \operatorname{rank}(G_0)\leq r_0,
  \qquad
  \operatorname{rank}(G_i)\leq r_i,
  \qquad i=1,\ldots,N .
\]
Let \(n_i\) denote the number of degrees of freedom on the \(i\)-th
overlapping subdomain, and let \(\mathcal{S}_{\mathrm{Schwarz}}\) denote
the set of all Schwarz-attention operators of the form
\eqref{eq:dd-attention} satisfying these rank bounds.

\begin{proof}[Proof of Theorem~\ref{theo:two}]
The rank of the coarse contribution is bounded by
\[
  \operatorname{rank}(\Phi G_0\Phi^T)
  \leq
  \min\{r_0,\operatorname{rank}(\Phi)\}.
\]
Similarly,
\[
  \operatorname{rank}
  \left(
    R_i^T D_i^{1/2}G_iD_i^{1/2}R_i
  \right)
  \leq
  \min\{r_i,n_i\}.
\]
The rank estimate follows by subadditivity of the rank. Since the best
unstructured rank-\(R_{\mathrm{total}}\) approximation of \(A^{-1}\) has
Frobenius error
\[
  \left(
  \sum_{k=R_{\mathrm{total}}+1}^{n-1} \lambda_k^{-2}
  \right)^{1/2},
\]
no operator in \(\mathcal{S}_{\mathrm{Schwarz}}\), whose rank is bounded
by \(R_{\mathrm{total}}\), can have a smaller Frobenius error.
\end{proof}

\subsection{Visualization of the Learned Operators}
The learned operator structure can be inspected by visualizing the global operator, the local domain-decomposition contribution, the coarse contribution, and the full learned operator; see Figure~\ref{fig:operator-matrices}. Figure~\ref{fig:operator-matrices} also visualizes the interface hat functions in the lower rightmost panel.
In Figure~\ref{fig:local-blocks} the learned attention blocks are visualized.
Figure~\ref{fig:phi-matrix} visualizes the $\Phi^T$-matrix, which collects the hat functions as its rows.

\begin{figure}[htb]
  \centering
  \includegraphics[width=\textwidth]{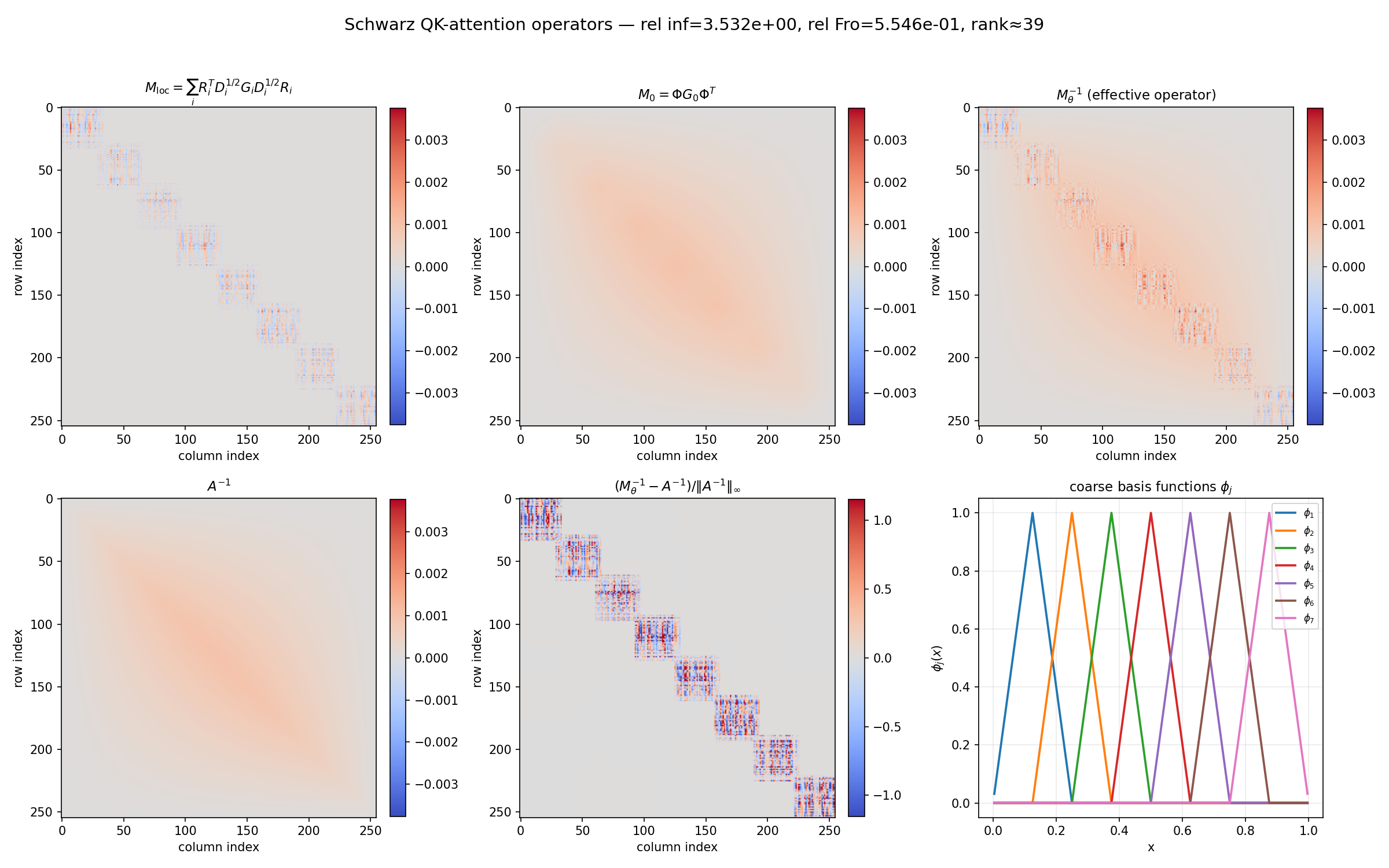}
  \caption{Visualization of the
 learned hierarchical domain-decomposition
attention operator.
Top row, from left to right: sum of the local overlapping
Schwarz contribution
$M_{\mathrm{loc}}=\sum_i R_i^T D_i^{1/2}G_iD_i^{1/2}R_i$,
the coarse contribution $M_0=\Phi G_0\Phi^T$, and the full learned
operator $M_\theta^{-1}=M_{\mathrm{loc}}+M_0$.
Bottom row, from left to right: the exact inverse $A^{-1}$, the
normalized error
$(M_\theta^{-1}-A^{-1})/\|A^{-1}\|_\infty$, and the interface-hat
coarse basis functions $\phi_j$ used in the interpolation matrix $\Phi$.
The local contribution is block-local due to the subdomain restrictions,
whereas the coarse contribution introduces global coupling; hence the
assembled two-level operator is generally dense.
  %
  }
  \label{fig:operator-matrices}
\end{figure}

\begin{figure}[htb]
  \centering
  \includegraphics[width=\textwidth]{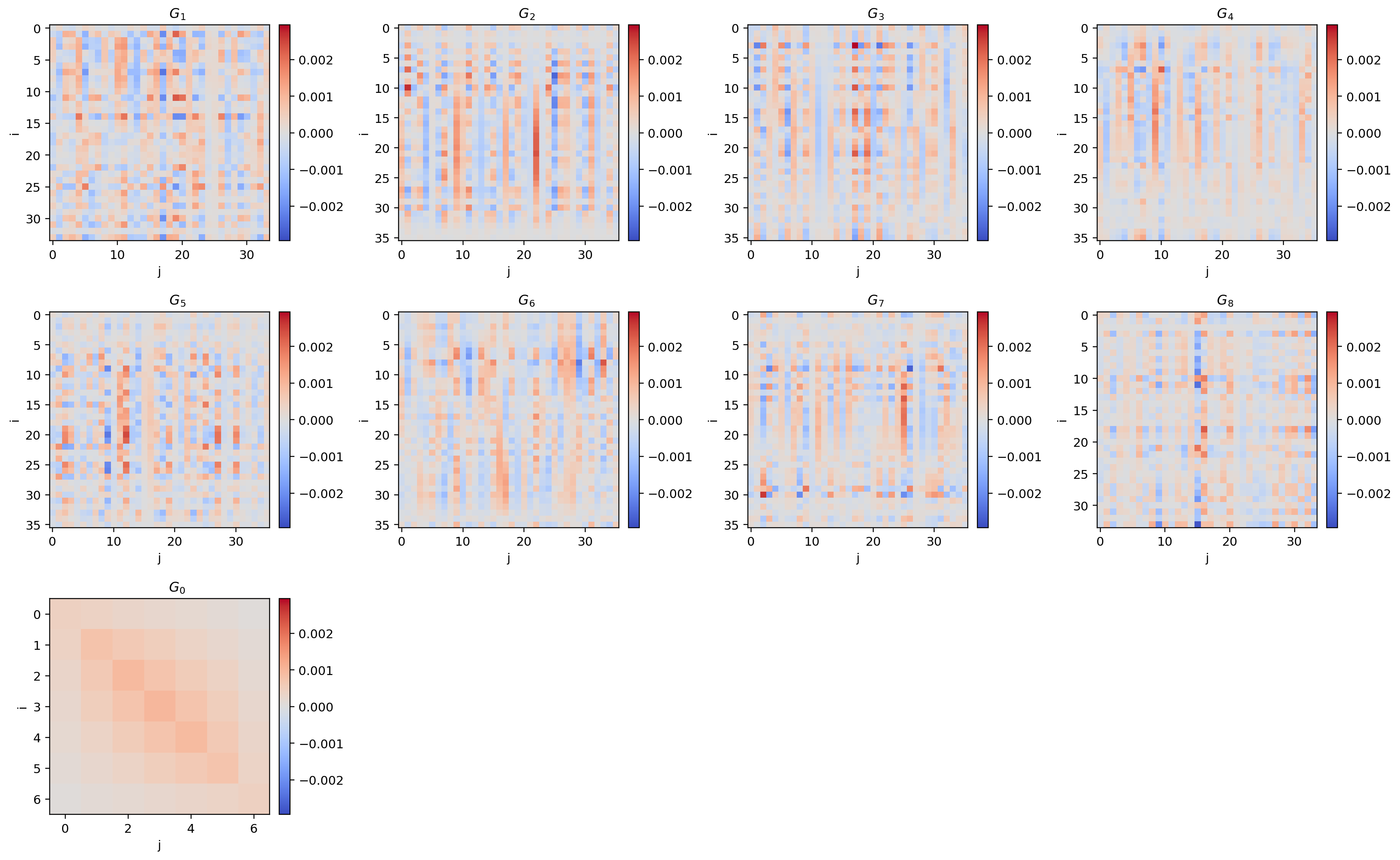}
  \caption{
  Example of learned local low-rank attention blocks \({\color{black}G_i=}Q_iK_i^T\) and coarse block \({\color{black}G_0=}Q_0K_0^T\) for $8$ subdomains and a coarse attention of size $7$.}
  \label{fig:local-blocks}
\end{figure}

\begin{figure}[htb]
  \centering
  \includegraphics[width=0.75\textwidth]{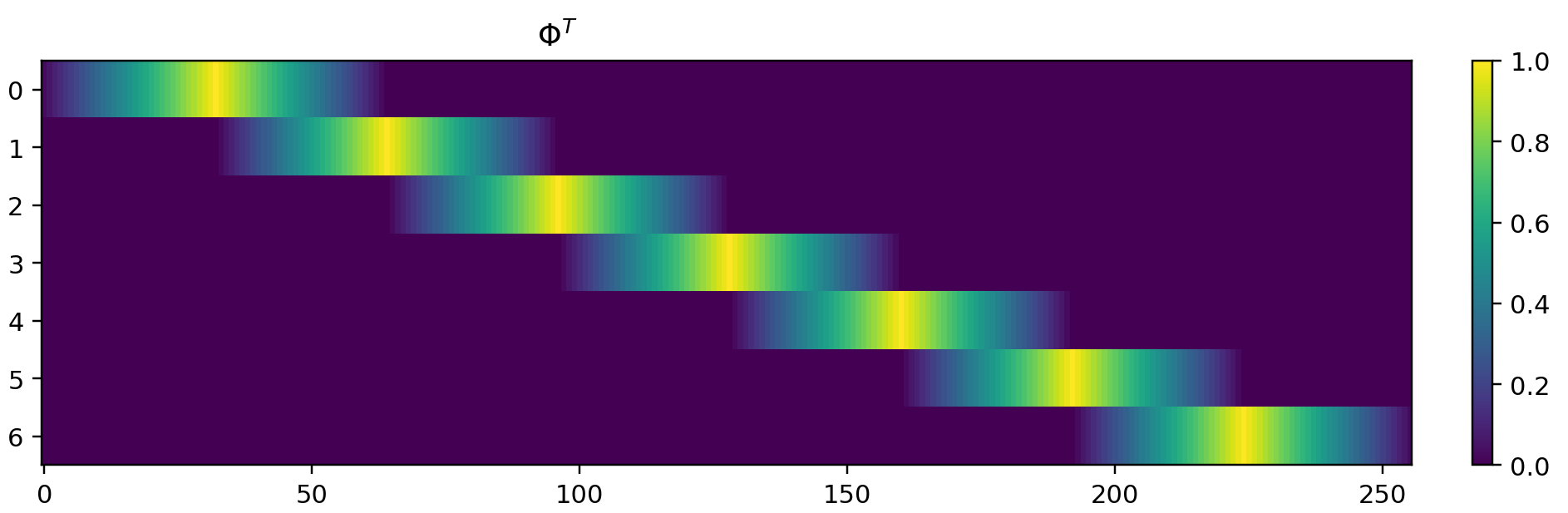}
  \caption{
  Interface-hat coarse basis used in the restriction matrix $\Phi^T$. For the decomposition into eight disjoint index sets, one hat function is associated with each pair of neighboring index sets, giving seven coarse basis functions.
These functions span the coarse space on which the coarse attention block $G_0=Q_0K_0^T$ acts before being lifted to the fine grid as $\Phi G_0\Phi^T$.
  }
  \label{fig:phi-matrix}
\end{figure}

\end{document}